\documentclass[twoside,11pt]{article}

\usepackage{amsmath, amssymb, amsfonts, amsthm, mathtools, mathrsfs, bbm} 
\usepackage{graphicx} 
\usepackage{hyperref}
\usepackage{geometry} 
\usepackage{setspace} 
\usepackage{caption} 
\usepackage[numbers, sort]{natbib}
\usepackage{lineno} 

\usepackage{lipsum}
\usepackage{titlesec}
\usepackage{fancyhdr} 
\usepackage{array}
\usepackage{xcolor}
\usepackage{enumitem}
\usepackage{tikz} 
\usetikzlibrary{calc, arrows.meta, intersections, patterns, positioning, shapes.misc, fadings, through,decorations.pathreplacing, trees, shapes.geometric,math,arrows}
\usepackage{tikz-cd}
\usepackage{amscd}
\usepackage{pgfplots}
\pgfplotsset{compat=1.18}
\usepgfplotslibrary{fillbetween}
\usepackage{url}
\usepackage{aliascnt}
\usepackage{subcaption}

\newcommand{\supp}[1]{\mathrm{supp}\!\left(#1\right)}
\newcommand{\suppinline}[1]{\mathrm{supp}(#1)}
\newcommand{\spanC}[1]{\mathrm{span}_{\mathbb{C}}\!\left(#1\right)}
\newcommand{\spanR}[1]{\mathrm{span}_{\mathbb{R}}\!\left(#1\right)}
\newcommand{\spanRinline}[1]{\mathrm{span}_{\mathbb{R}}(#1)}
\newcommand{\spanCinline}[1]{\mathrm{span}_{\mathbb{C}}(#1)}

\newcommand{\dimC}[1]{\mathrm{dim}_{\mathbb{C}}\!\left(#1\right)}

\newcommand{\dimR}[1]{\mathrm{dim}_{\mathbb{R}}\!\left(#1\right)}
\newcommand{\dimRinline}[1]{\mathrm{dim}_{\mathbb{R}}(#1)}
\newcommand{\sepcap}[1]{\mathcal{SC}\!\left(#1\right)}

\newcommand{\rank}[1]{\mathrm{rank}\!\left(#1\right)}
\newcommand{\rankinline}[1]{\mathrm{rank}(#1)}
\newcommand{\sepcaps}[2]{\mathcal{SC}^{#1}\!\left(#2\right)}
\newcommand{\sepcapsinline}[2]{\mathcal{SC}^{#1}(#2)}
\newcommand{\innerprod}[2]{\left\langle #1, #2 \right\rangle}

\newcommand{\Mod}[1]{\ (\mathrm{mod}\ #1)}
\newcommand\restr[2]{{
  \left.\kern-\nulldelimiterspace 
  #1 
  \vphantom{\big|} 
  \right|_{#2} 
  }}
\newcommand\restrinline[2]{{
  \,\kern-\nulldelimiterspace 
  #1 
  |_{#2} 
  }}
\newcommand{\restrmeasure}[2]{#1 \llcorner #2}
\newcommand{\pushfwd}[2]{#2_{\sharp}\!\left(#1\right)}
\newcommand{\pushfwdinline}[2]{#2_{\sharp}(#1)}
\newcommand{\pushfwdnb}[2]{#2_{\sharp}#1}
\newcommand{\essinf}[2]{\mathrm{ess}\,\mathrm{inf}_{#1}\,#2}
\newcommand{\lip}[1]{\mathrm{Lip}\!\left(#1\right)}
\newcommand{\trace}[1]{\mathrm{Tr}\!\left(#1\right)}
\newcommand{\traceinline}[1]{\mathrm{Tr}(#1)}
\newcommand{\lipbound}{t}
\renewcommand{\ker}[1]{\operatorname{ker}(#1)}
\newcommand{\diag}[1]{\operatorname{diag}(#1)}

\newtheorem{theorem}{Theorem}[section]

\newaliascnt{lemma}{theorem}
\newtheorem{lemma}[lemma]{Lemma}
\aliascntresetthe{lemma}

\newaliascnt{proposition}{theorem}
\newtheorem{proposition}[proposition]{Proposition}
\aliascntresetthe{proposition}

\newaliascnt{assumption}{theorem}

\aliascntresetthe{assumption}

\newtheorem{corollary}{Corollary}[theorem]

\newaliascnt{definition}{theorem}
\theoremstyle{definition}
\newtheorem{definition}[definition]{Definition}
\aliascntresetthe{definition}

\newaliascnt{remark}{theorem}
\theoremstyle{remark}
\newtheorem{remark}[remark]{Remark}
\aliascntresetthe{remark}

\numberwithin{equation}{section}
\counterwithin{figure}{section}

\titleformat{\section}[block]{\large\bfseries}{\thesection}{1em}{}
\titleformat{\subsection}[block]{\normalsize\bfseries}{\thesubsection}{1em}{}
\titleformat{\subsubsection}[block]{\normalsize\bfseries}{\thesubsubsection}{1em}{}

\titleformat{\paragraph}[runin]
  {\normalfont\normalsize\bfseries}
  {\theparagraph}
  {1em}
  {}
  [.]

\fancypagestyle{plain}{
    \fancyhf{}

}

\newcommand{\mytitle}{Separation Capacity of Scattering Networks on Low-Dimensional Datasets}
\title{\mytitle}
\author{
    Konstantin H\"aberle\\
    ETH Zurich\\
    \texttt{haeberlk@ethz.ch}
    \and
    Helmut B\"olcskei\\
    ETH Zurich\\
    \texttt{hboelcskei@ethz.ch}
}
\date{}

\usepackage[capitalize,nameinlink,compress]{cleveref}
\crefname{equation}{}{}
\crefname{subsection}{Subsection}{Subsections}
\crefname{remark}{Remark}{Remarks}
\crefname{assumption}{Assumption}{Assumptions}

\begin{document}

\maketitle

\begin{abstract}
    We aim to identify scattering network architectures that maximize the separation capacity on data with low intrinsic dimension. 
    The networks we consider employ a fixed monomial nonlinearity and no pooling, so that the only design variable is the frame generated by the network filters.
    For data modeled as rectifiable sets, we first characterize and bound the separation capacity of general feature extractors in terms of the geometry of the dataset. We then particularize to scattering networks and obtain two design criteria: (i) the filters should meet the data on sufficiently many frequencies, and (ii) the matrices coupling the frame to the geometry of the data should be well-conditioned.
    \\[1em]
    \noindent\textbf{Keywords:} Learning theory, pattern classification, feature extraction, scattering networks, convolutional neural networks, geometric measure theory.
\end{abstract}

\section{Introduction}
A common intuition for the success of deep learning models in classification and regression tasks is that real-world data, although embedded in high-dimensional spaces, often exhibit an intrinsic low-dimensional structure \cite{bengio2013representation,fefferman2016testing}. In this paper, we model this intrinsic low-dimensional structure using the framework of geometric measure theory \cite{federer2014geometric,ambrosio2000currents}. Specifically, we assume that data lie on a rectifiable set, i.e., a set that, up to Hausdorff measure zero, is covered by a countable union of Lipschitz images of subsets of a Euclidean space. This class contains not only smooth submanifolds but also countable unions thereof, such as the unions of linear subspaces that constitute sparse-signal models.

\sloppy This paper studies the classification capabilities of scattering networks \cite{mallat2012group,wiatowski2017deep,haberle2026scattering} on rectifiable sets, as measured by Cover's separation capacity \cite{cover1965geometrical,haberle2026function}. Scattering networks are multi-layered, neural-network-type architectures in which each layer computes convolutions with frame-generating filters \cite{christensen2003introduction}, followed by pointwise nonlinearities and, in general, pooling operators. We particularize to the pooling-free case with a fixed monomial nonlinearity, so that the design freedom lies entirely in the choice of the frame.
The central question addressed in this paper is how the geometry of the data should enter this choice. Our aim is to characterize the filters that maximize the separation capacity for a given rectifiable dataset.

Two main contributions are reported. 
First, we relate the separation capacity of general feature extractors to the geometry of the underlying rectifiable set. Notably, the separation capacity is bounded below by the rank of the differential on the approximate tangent spaces and bounded above by the rank of a second-moment matrix, which couples the feature extractor to the global geometry of the dataset.
Second, by applying these bounds to scattering networks, we establish filter-design criteria. The upper bounds prescribe that the spectral supports of the filters, intersected with the spectrum of the dataset, should not be contained in a coset of a proper subgroup. The lower bounds, in turn, suggest choosing the filters so that, for each Lipschitz parametrization of the underlying rectifiable set, an associated filter-dependent matrix is as well-conditioned as possible. For sparse signals, the criterion reduces to minimizing the restricted isometry constant of a single matrix.

The paper is organized as follows. \Cref{sec:sc} develops the connection between the separation capacity of general feature extractors on rectifiable sets and the geometry of those sets. In \cref{sec:sc-snet}, we apply these results to scattering networks to establish filter-design criteria, first for sparse signals and then for more general rectifiable sets. \Cref{app:cs} reviews the restricted isometry property used in the sparse-signal lower bound.

\paragraph{Notation} 
We write $\restrmeasure{\mu}{A}$ for the restriction of a measure $\mu$ on a set $X$ to a measurable subset $A$. For a measurable map $\varphi \colon X \to Y$, the pushforward of $\mu$ by $\varphi$ is the measure $\pushfwdnb{\mu}{\varphi}$ on $Y$ given by $(\pushfwdnb{\mu}{\varphi})(B) = \mu(\varphi^{-1}(B))$ for measurable $B \subseteq Y$. When $X$ is a topological space, the support of $\mu$ is $\supp{\mu} = \{x \in X : \mu(U) > 0 \text{ for every open neighborhood } U \text{ of } x\}.$ We write $\mathcal{L}^n$ for the $n$-dimensional Lebesgue measure on $\mathbb R^n$ ($n \in \mathbb N$) and $\mathcal{H}^s$ for the $s$-dimensional Hausdorff measure ($s \geq 0$). Finally, for $\mathbb K \in \{\mathbb R, \mathbb C\}$ and $A \subseteq \mathbb K^n$,
we write $\mathrm{span}_{\mathbb K}(A)$ for the set of finite linear combinations of vectors in $A$ with scalars in $\mathbb K$, and $\dim_{\mathbb K}(V)$ for the dimension of a linear space $V$ over $\mathbb K$.

\section{Separation capacity computations on rectifiable sets}\label{sec:sc}
This section develops the capacity estimates used later for scattering networks. The first step is a subspace characterization of the separation capacity. The subsequent estimates relate that characterization to rectifiability, approximate tangent spaces, and second moments of the pushforward of $\mathcal{H}^s$ by the feature extractor.
We begin by introducing the notion of $s$-separation capacity.
\begin{definition}[$s$-separation capacity, \cite{haberle2026function}]
    Let $E \subseteq \mathbb R^M$ be $\mathcal{H}^s$-measurable with $\mathcal{H}^s(E)>0$ for some $s\geq 0$, and let $\Phi \colon E \to \mathbb R^{M'}$. Denote by $\sepcaps{s}{\Phi}$ the largest $N \in \mathbb N$ such that for $(\mathcal{H}^{s})^N$-a.e. $N$-tuple $F\coloneqq(f_1,\ldots,f_N) \in E^N$ at least $50\%$ of all possible dichotomies of $F$ are $\Phi$-separable. If there is no such $N \in \mathbb N$, set $\sepcaps{s}{\Phi} \coloneqq 0$. We call $\sepcaps{s}{\Phi}$ the \emph{$s$-separation capacity} of $\Phi$. If $s = M$, we write $\sepcap{\Phi} \coloneqq \sepcaps{M}{\Phi}$ and call $\sepcap{\Phi}$ the \emph{separation capacity} of $\Phi$. 
\end{definition}
The next lemma recasts the $s$-separation capacity as a minimization over linear subspaces that
carry positive pushforward measure.
\begin{lemma}\label{L:sepcap}
    Let $\Phi \colon E \to \mathbb R^{M'}$ be measurable. We have
    \begin{align}\label{eq:sepcap-form1}
        \sepcaps{s}{\Phi} = 2\cdot\min \left\{\dimR{V} \colon V \subseteq \mathbb R^{M'} \text{ linear subspace, } \left(\pushfwd{\restrmeasure{\mathcal{H}^s}{E}}{\Phi}\right)\!(V) >0\right\}.
    \end{align}
\end{lemma}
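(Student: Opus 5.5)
Write $\mu \coloneqq \pushfwd{\restrmeasure{\mathcal{H}^s}{E}}{\Phi}$, a measure on $\mathbb R^{M'}$, and let $d$ denote the minimum on the right-hand side of \eqref{eq:sepcap-form1}. The set over which we minimize is nonempty because $V=\mathbb R^{M'}$ satisfies $\mu(\mathbb R^{M'})=\mathcal{H}^s(E)>0$. We interpret $\Phi$-separability of a dichotomy $(F^+,F^-)$ as linear separability of the images $\Phi(F^+)$ and $\Phi(F^-)$ by a hyperplane through the origin, i.e., homogeneous separability as in Cover's setting, since this is the convention under which the factor $2$ arises. The strategy is to reduce to Cover's function counting theorem. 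Cover's theorem says that for $N$ points $x_1,\dots,x_N\in\mathbb R^{d'}$ in general position, meaning every subset of at most $d'$ of them is linearly independent, the number of homogeneously separable dichotomies is
\[
C(N,d') = 2\sum_{k=0}^{d'-1}\binom{N-1}{k}.
\]
This count is at least $2^{N-1}$, i.e., at least $50\%$ of all dichotomies, exactly when $N\le 2d'$. For points that are not in general position, the number of separable dichotomies is at most $C(N,r)$, where $r$ is the dimension of the span of the points. We also need a monotonicity fact: the count can only decrease under degeneracy, since $C(N,d')$ is the maximal count.

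\textbf{Step 1 (structure of $\mu$).} Let $V_0$ be a subspace of dimension $d$ with $\mu(V_0)>0$. By minimality, every subspace $W$ with $\dim W<d$ satisfies $\mu(W)=0$. Consider the restricted measure $\nu=\restrmeasure{\mu}{V_0}$, viewed on $V_0\cong\mathbb R^d$. Every proper subspace $W\subsetneq V_0$ has $\dim W<d$, so $\nu(W)=0$. Therefore, if we sample $N$ points i.i.d. from the normalized $\nu$ (for $\sigma$-finiteness one may need to restrict to a finite-measure piece), then almost surely every $k\le d$ of them are linearly independent. This follows by induction: conditionally on the first $k-1$ points, their span is a subspace of dimension less than $d$ and hence is $\nu$-null. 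This gives the lower bound, as follows. Take $N=2d$. The event that all $f_i$ satisfy $\Phi(f_i)\in V_0$ has positive $(\mathcal H^s)^N$-measure. On this event, the images are almost surely in general position in $V_0$, so exactly $C(2d,d)=2^{2d-1}$ dichotomies are separable, which is exactly $50\%$.

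At this point there is a mismatch with the definition: $\mathcal{SC}^s$ requires the property for almost every tuple, not for a positive-measure set of tuples. So for the bound $\ge 2d$ I must show that almost every tuple has at least $50\%$ of its dichotomies separable. I will argue this directly. For a.e. tuple, $\Phi(f_1),\dots,\Phi(f_N)$ lie in $\mathbb R^{M'}$, and every subset of at most $d$ of them is linearly independent: by induction as above, the span of the first $k-1<d$ images is $\mu$-null. Cover's count is monotone in the general-position parameter. Precisely, if every $\min(N,d)$-subset is independent, then the number of separable dichotomies is at least $C(N,d)$. This can be shown via Cover's recursion, or by projecting generically onto $\mathbb R^d$: a generic projection preserves the independence of $d$-subsets, and homogeneous separability after projection implies separability before. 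Hence for $N\le 2d$ at least half of the dichotomies are separable, which gives $\mathcal{SC}^s\ge 2d$. (The definition asks for the largest $N$ for which the property holds; this is well defined because the property fails for all larger $N$, as shown next.)

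\textbf{Step 2 (upper bound).} For $N=2d+1$, consider the positive-measure set of tuples with all $\Phi(f_i)\in V_0$. On this set the images span at most $d$ dimensions, so the number of separable dichotomies is at most $C(2d+1,d)<2^{2d}$. Thus the $50\%$ property fails on a set of positive measure, so it does not hold almost everywhere, and $\mathcal{SC}^s\le 2d$. For larger $N$ the same argument applies, so the property fails for every $N>2d$. Together the two steps yield equality.

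\textbf{Main obstacle.} The main obstacle is the monotonicity claim in Step 1. It requires a version of Cover's theorem stating that points which are only ``$d$-generic'' in the ambient space $\mathbb R^{M'}$ admit at least $C(N,d)$ separable dichotomies. My plan is to prove it by a generic linear projection $P\colon\mathbb R^{M'}\to\mathbb R^d$ that keeps every $d$-subset independent. Such a $P$ exists because the excluded set is a finite union of proper algebraic subsets. Then separability of $Px_i$ by a vector $w$ implies separability of $x_i$ by $P^{\top}w$. The measure-theoretic points, namely $\sigma$-finiteness of $\restrmeasure{\mathcal H^s}{E}$ and Fubini for the inductive independence argument, I expect to be routine.
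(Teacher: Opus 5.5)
Your argument is correct, but it takes a genuinely different route from the paper. The paper never invokes Cover's counting theorem in this proof. It imports from \cite{haberle2026function} the characterization \eqref{eq:sepcap-form2}, $\mathcal{SC}^s(\Phi)=2\min\{\dim_{\mathbb R}\mathrm{span}_{\mathbb R}\Phi(A)\colon A\subseteq E,\ \mathcal{H}^s(A)>0\}$. It then only translates between positive-measure sets and subspaces:
\begin{itemize}
\item a minimizing $A$ yields $U=\mathrm{span}_{\mathbb R}\Phi(A)$ with $\Phi^{-1}(U)\cap E\supseteq A$, hence $U$ has positive pushforward measure;
\item conversely, a subspace $V$ of positive pushforward measure yields $A'=\Phi^{-1}(V)\cap E$ with $\mathcal{H}^s(A')>0$ and $\mathrm{span}_{\mathbb R}\Phi(A')\subseteq V$.
\end{itemize}

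You instead prove the statement from first principles, in effect reproving \eqref{eq:sepcap-form2} along the way. Minimality of $d$ makes every subspace of dimension $<d$ a $\mu$-null set, and Tonelli then gives a.e.\ $d$-wise linear independence of the images. The generic projection onto $\mathbb R^d$ yields at least $C(N,d)\ge 2^{N-1}$ separable dichotomies for $N\le 2d$. For the matching bound, the region count gives at most $C(N,d)<2^{N-1}$ separable dichotomies for $N>2d$ on the positive-measure set $(\Phi^{-1}(V_0)\cap E)^N$. The paper's route is two lines but outsources all the combinatorial and measure-theoretic content. Yours is self-contained and shows explicitly why the threshold is exactly twice the minimal dimension.

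Two small remarks. First, the opening half of your Step 1 (the positive-measure event inside $V_0$) is superseded by your a.e.\ argument and can be dropped. Second, the Tonelli step needs $\restrmeasure{\mathcal{H}^s}{E}$ to be $\sigma$-finite: $E$ must be covered by countably many finite-measure pieces, since restricting to a single piece gives no a.e.\ statement on $E^N$. This is the same tacit standing assumption under which $(\mathcal{H}^s)^N$ is meaningful at all, and the paper inherits it through its citation.
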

\begin{proof}
    As shown in \cite{haberle2026function}, it holds that 
    \begin{align}\label{eq:sepcap-form2}
        \sepcaps{s}{\Phi} = 2 \min_{\substack{A \subseteq E \\ \mathcal{H}^s(A)>0}} \dimR{\spanR{\Phi(A)}}.
    \end{align}
    To establish the claim of this lemma, we first prove that \cref{eq:sepcap-form1} holds with ``$\geq$''. To this end, let $U= \spanRinline{\Phi(A)}$, where $A \subseteq E$ with $\mathcal{H}^s(A)>0$ is such that $\sepcaps{s}{\Phi} = 2\cdot\dimRinline{U}$. We then have $(\Phi^{-1}(U) \cap E) \supseteq A$, which implies $(\pushfwd{\restrmeasure{\mathcal{H}^s}{E}}{\Phi})(U) >0$, and the inequality ``$\geq$'' follows.  
    
    \sloppy Turning to the reverse inequality, i.e., ``$\leq$'', let $V\subseteq \mathbb R^{M'}$ be a linear subspace with $\mathcal{H}^s(\Phi^{-1}(V) \cap E)>0$ such that $2\cdot\dimRinline{V}$ equals the right-hand side (RHS) of \cref{eq:sepcap-form1}. Setting $A' \coloneqq \Phi^{-1}(V) \cap E$, we have $\spanRinline{\Phi(A')} \subseteq V$, as $V$ is a linear subspace. Consequently, as $\mathcal{H}^s(A')>0$, \cref{eq:sepcap-form2} yields $\sepcaps{s}{\Phi} \leq 2\cdot  \dimRinline{\spanRinline{\Phi(A')}} \leq 2\cdot \dimRinline{V}$. But $2\cdot\dimRinline{V}$ equals the RHS of \cref{eq:sepcap-form1}. This establishes the inequality ``$\leq$'', and the proof is complete.   
\end{proof}
\cref{L:sepcap} links the $s$-separation capacity to the lower Hausdorff dimension of a measure. Writing $\dim_{\mathrm{H}}(B)$ for the Hausdorff dimension of a set $B \subseteq \mathbb R^{M'}$, the lower Hausdorff dimension of a measure $\mu$ on $\mathbb R^{M'}$ is defined to be 
\begin{align*}
    \underline{\dim}_{\mathrm{H}}(\mu) = \inf \left\{\dim_{\mathrm{H}}(B)\colon B \subseteq \mathbb R^{M'} \text{ measurable, } \mu(B)>0\right\}.
\end{align*}

\begin{lemma}[$s$-separation capacity and lower Hausdorff dimension] For every measurable $\Phi \colon E \to \mathbb R^{M'}$, it holds that
        \begin{align*}
        \sepcaps{s}{\Phi} \geq 2 \cdot \underline{\dim}_{\mathrm{H}} \left(\pushfwd{\restrmeasure{\mathcal{H}^s}{E}}{\Phi}\right).
    \end{align*}
\end{lemma}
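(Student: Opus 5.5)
The plan is to compare the variational formula of \cref{L:sepcap} with the definition of the lower Hausdorff dimension directly. Write $\mu \coloneqq \pushfwd{\restrmeasure{\mathcal{H}^s}{E}}{\Phi}$. By \cref{L:sepcap}, $\sepcaps{s}{\Phi} = 2\cdot\dimR{V^\ast}$ for some linear subspace $V^\ast \subseteq \mathbb R^{M'}$ with $\mu(V^\ast)>0$. Such a minimizer exists because the admissible dimensions form a nonempty subset of $\{0,1,\ldots,M'\}$; the set is nonempty since $V=\mathbb R^{M'}$ gives $\mu(\mathbb R^{M'}) = \mathcal{H}^s(E)>0$.

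The key observation is that every linear subspace is a legitimate competitor in the infimum defining $\underline{\dim}_{\mathrm{H}}(\mu)$. Indeed, $V^\ast$ is closed and hence Borel measurable, and it carries positive $\mu$-mass. Therefore
\begin{align*}
    \underline{\dim}_{\mathrm{H}}(\mu) \leq \dim_{\mathrm{H}}(V^\ast).
\end{align*}

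It then remains to use the standard fact that a $k$-dimensional linear subspace of $\mathbb R^{M'}$ has Hausdorff dimension exactly $k$. I would justify this by noting that such a subspace is an isometric image of $\mathbb R^k$, and that $\mathcal{H}^k$ restricted to it agrees (up to normalization) with $\mathcal{L}^k$, which is $\sigma$-finite and nonzero. Hence $\dim_{\mathrm{H}}(V^\ast) = \dimR{V^\ast}$. Combining the steps gives
\begin{align*}
    2\cdot\underline{\dim}_{\mathrm{H}}(\mu) \leq 2\cdot \dimR{V^\ast} = \sepcaps{s}{\Phi},
\end{align*}
which is the claim.

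I do not expect a real obstacle. The only points needing care are measurability, meaning that the subspace lies in the $\sigma$-algebra on which the pushforward is defined; this holds since Borel sets pull back to $\mathcal{H}^s$-measurable sets under the measurable map $\Phi$. One should also confirm the normalization of $\mathcal{H}^k$ on subspaces, but only its positivity and $\sigma$-finiteness matter, not the exact constant. The degenerate case $\dimR{V^\ast}=0$ is consistent as well: $V^\ast=\{0\}$ has Hausdorff dimension $0$.
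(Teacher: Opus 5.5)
Your proposal is correct and is essentially the paper's argument. The paper relaxes the minimization in \cref{eq:sepcap-form1} from linear subspaces to arbitrary measurable sets and uses $\dim_{\mathrm H}(V)=\dim_{\mathbb R}(V)$, which is exactly what you do by taking the minimizing subspace $V^\ast$ as a competitor in the infimum defining $\underline{\dim}_{\mathrm{H}}(\mu)$. Your extra justifications (existence of the minimizer via $\mu(\mathbb R^{M'})=\mathcal{H}^s(E)>0$, Borel measurability of $V^\ast$, and positivity plus $\sigma$-finiteness of $\mathcal{H}^k$ on $V^\ast$ giving $\dim_{\mathrm H}(V^\ast)=\dim_{\mathbb R}(V^\ast)$) are all sound.
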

\begin{proof}
    The claim follows immediately by relaxing the minimization in \cref{eq:sepcap-form1} from linear subspaces $V \subseteq \mathbb{R}^{M'}$ to arbitrary measurable subsets $B \subseteq \mathbb{R}^{M'}$, upon noting that for every linear subspace $V$, one has $\dim_{\mathrm H}(V)=\dim_{\mathbb R}(V)$.
\end{proof}
Next, we will leverage \cref{L:sepcap} to express the $s$-separation capacity in terms of the geometry of the input set $E$, when $E$ is rectifiable. Recall the notion of a rectifiable set.   
\begin{definition}[Countably $\mathcal{H}^s$-rectifiable set, \cite{ambrosio2000currents,federer2014geometric,simon2014introduction}]
    An $\mathcal{H}^s$-measurable set $E \subseteq \mathbb R^M$ is said to be \emph{countably $\mathcal H^s$-rectifiable} if there is a countable family of Lipschitz maps $\{\psi_k \colon \mathbb R^s \to \mathbb R^M\}_{k \in \mathbb N}$ such that
            \begin{align*}
                \mathcal H^s \left(E \setminus \bigcup_{k \in \mathbb N} \psi_k(\mathbb R^s) \right) =0.
            \end{align*}
\end{definition}
\begin{lemma}[Bi-Lipschitz parametrization, \cite{ambrosio2000currents,federer2014geometric}]\label{L:bi-Lip}
    Let $E \subseteq \mathbb R^M$ be countably $\mathcal H^s$-rectifiable, and let $\lipbound >1$. Then there exist finitely or countably many compact sets $K_j \subset \mathbb R^s$ and $\lipbound$-bi-Lipschitz maps $\psi_j \colon K_j \to \psi_j(K_j) \subseteq E$, indexed by $\mathcal{J}$, such that $\{\psi_j(K_j)\}_{j \in \mathcal{J}}$ are pairwise disjoint and 
    \begin{align*}
        \mathcal H^s \left(E \setminus \bigcup_{j \in \mathcal{J}} \psi_j(K_j) \right) =0.
    \end{align*}
\end{lemma}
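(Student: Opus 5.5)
We prove the statement by starting from the countable Lipschitz cover supplied by the definition of countable $\mathcal H^s$-rectifiability. We then refine that cover in three steps: restrict to compact pieces on which each parametrization is bi-Lipschitz, make the pieces disjoint, and control the exceptional sets so that their union is $\mathcal H^s$-null.

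\textbf{Step 1: pieces where the differential has full rank.} Let $\{\psi_k\}_{k\in\mathbb N}$ be Lipschitz maps $\mathbb R^s\to\mathbb R^M$ with $\mathcal H^s(E\setminus\bigcup_k\psi_k(\mathbb R^s))=0$. By Rademacher's theorem each $\psi_k$ is differentiable $\mathcal L^s$-a.e. By the area formula, $\int_{\mathbb R^s} J\psi_k\,d\mathcal L^s=\int \#\psi_k^{-1}(y)\,d\mathcal H^s(y)$. Applied to the set $Z_k=\{J\psi_k=0\}$, this gives $\mathcal H^s(\psi_k(Z_k))=0$, so we may discard $Z_k$. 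We may likewise discard $\psi_k^{-1}(\mathbb R^M\setminus E)$, since only the part of $\psi_k(\mathbb R^s)$ lying in $E$ matters. The set $E\cap\psi_k(\mathbb R^s)$ is $\mathcal H^s$-measurable, so its preimage is $\mathcal L^s$-measurable up to null sets.

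\textbf{Step 2: local bi-Lipschitz pieces via the Federer/Whitney decomposition.} Fix $\lipbound>1$ and choose $\varepsilon>0$ small relative to $\lipbound$. Cover the set of invertible linear maps $L\colon\mathbb R^s\to\mathbb R^M$ by countably many sets on which $L$ is within $\varepsilon$ of a fixed injective map $L_i$. For each $k$ and $i$, let $B_{k,i,m}$ consist of the points $x$ outside $Z_k$ such that $\|D\psi_k(x)-L_i\|\le\varepsilon$ and
\[
|\psi_k(y)-\psi_k(x)-D\psi_k(x)(y-x)|\le\varepsilon|y-x| \quad\text{for } |y-x|<1/m.
\]
These sets cover $\mathbb R^s\setminus Z_k$ up to a null set. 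Intersect them with cubes of diameter less than $1/m$. On each such piece the estimate $|\psi_k(y)-\psi_k(x)-L_i(y-x)|\le 2\varepsilon|y-x|$ holds, which makes $\psi_k$ bi-Lipschitz with constants governed by $\|L_i\|$ and $\|L_i^{-1}\|$. To reach the constant $\lipbound$ itself, we precompose with the linear change of variables $L_i^{-1}$ restricted to the image, i.e.\ we replace $\psi_k$ by $\psi_k\circ (L_i|)^{-1}$ on $L_i$-coordinates identified with $\mathbb R^s$ via an isometry. The map $L_i$ becomes an isometry after this reparametrization, and with $\varepsilon$ small the resulting map is $\lipbound$-bi-Lipschitz. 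This step is the main technical obstacle. It is precisely Federer's Lemma 3.2.2 and is the lemma's content; we will cite \cite{federer2014geometric} for the details rather than reprove it.

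\textbf{Step 3: compactness and disjointness.} By inner regularity of $\mathcal L^s$, each measurable piece $A$ contains compact sets exhausting it up to a null set. Images of null sets under Lipschitz maps are $\mathcal H^s$-null, so we lose nothing by passing to countably many compact pieces. Enumerate all pieces as $(\tilde K_j,\tilde\psi_j)$. Define $K_1=\tilde K_1$, and in general set $K_j'=\tilde K_j\setminus\tilde\psi_j^{-1}\bigl(\bigcup_{i<j}\psi_i(K_i)\bigr)$. The subtracted union is a finite union of compact images, hence compact, so $K_j'$ is Borel. Approximate $K_j'$ from inside by countably many disjoint compacts, again up to null sets. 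The images are then pairwise disjoint, and the image maps restricted to these compacts remain $\lipbound$-bi-Lipschitz. Collecting the null exceptional sets from all steps, a countable union whose images are $\mathcal H^s$-null since Lipschitz maps preserve $\mathcal H^s$-nullity up to a constant, yields $\mathcal H^s(E\setminus\bigcup_j\psi_j(K_j))=0$.
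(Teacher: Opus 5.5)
Your outline is the standard proof of the result the paper cites without proof: Federer's Lemma 3.2.2 gives pieces on which a linear reparametrization of $\psi_k$ is $t$-bi-Lipschitz, and inner regularity plus disjointification finish the argument, so your proposal matches the paper's intended argument. The one thing to correct is in your Step 2 sketch, where a uniform tolerance $\varepsilon$ does not work: after precomposing with $L_i^{-1}$ the distortion is $2\varepsilon\|L_i^{-1}\|$, which is uncontrolled where $D\psi_k$ is nearly degenerate, so the tolerance must scale like $\|L_i^{-1}\|^{-1}$ (Federer measures the error as $\varepsilon|T_i(y-x)|$), and by density this still gives a countable cover of the injective maps.
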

Rectifiable sets admit linear approximation properties, formalized through the notion of approximate tangent spaces. For $f \in \mathbb R^M$ and $\lambda>0$, let $\eta_{f,\lambda}\colon \mathbb R^M \to \mathbb R^M, h \mapsto (h-f)/\lambda$. We use $C_c(\mathbb R^M)$ to denote the space of continuous, compactly supported functions on $\mathbb{R}^M$.
\begin{definition}[Approximate tangent space, \cite{simon2014introduction}]
    Suppose that $E \subseteq \mathbb R^M$ is $\mathcal{H}^s$-measurable with $\mathcal{H}^s(E)<\infty$, and let $f \in \mathbb R^M$. We call an $s$-dimensional linear subspace $L \subseteq \mathbb R^M$ the \emph{approximate tangent space of $E$ at $f$} if 
    \begin{align*}
        \lim_{\lambda \to 0^+} \int_{\eta_{f,\lambda}(E)} \phi\,\mathrm{d}\mathcal{H}^s = \int_L \phi\,\mathrm{d}\mathcal{H}^s, \quad \text{for all $\phi \in C_c(\mathbb R^M)$}.
    \end{align*}
    We shall write $T_fE \coloneqq L$.
\end{definition}
If $E \subseteq \mathbb{R}^M$ is countably $\mathcal{H}^s$-rectifiable, then the approximate tangent space $T_fE$ exists for $\mathcal{H}^s$-a.e. $f \in E$ \cite{simon2014introduction,federer2014geometric}. This allows us to differentiate Lipschitz maps on $E$. For a Lipschitz map $\Phi \colon E \to \mathbb{R}^{M'}$ and a Lipschitz extension $\overline{\Phi} \colon \mathbb{R}^M \to \mathbb{R}^{M'}$ of $\Phi$, we write $\mathrm{d}\Phi_f \coloneqq \mathrm{d}\overline{\Phi}_f$ for the differential of $\overline{\Phi}$ at $f$. By \cite[Theorem~11.4]{maggi2012sets}, the restriction $\mathrm{d}\Phi_f\rvert_{T_fE} \colon T_fE \to \mathbb{R}^{M'}$ exists for $\mathcal{H}^s$-a.e. $f \in E$ and, by \cite[Lemma~11.5]{maggi2012sets}, it is independent of the choice of the extension $\overline{\Phi}$.  
With this notion at hand, we establish a connection between the separation capacity and the tangential geometry of $E$.
\begin{lemma}[$s$-separation capacity and approximate tangent spaces]\label{L:lower-bound}
Let $E$ be countably $\mathcal{H}^s$-rectifiable with $\mathcal{H}^s(E)<\infty$ and  $\Phi\colon E\to \mathbb R^{M'}$ be Lipschitz.
Then, 
    \begin{align}\label{eq:lower-bound-tangent}
        \sepcaps{s}{\Phi} \geq 2 \cdot \essinf{f \in E}{\rank{\restr{\mathrm{d}\Phi_f}{T_fE}}}.
    \end{align}
\end{lemma}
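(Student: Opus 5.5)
By \cref{L:sepcap}, it suffices to show the following. Let $r \coloneqq \essinf{f\in E}{\rankinline{\restrinline{\mathrm{d}\Phi_f}{T_fE}}}$. Then every linear subspace $V\subseteq\mathbb R^{M'}$ with $\dimRinline{V}<r$ satisfies $\mathcal H^s(\Phi^{-1}(V)\cap E)=0$. Then the minimum in \cref{eq:sepcap-form1} is at least $r$, and the claim follows. I would argue by contradiction: fix such a $V$ and suppose $A\coloneqq \Phi^{-1}(V)\cap E$ has $\mathcal H^s(A)>0$. Since $\Phi$ is Lipschitz, hence continuous, $A$ is $\mathcal H^s$-measurable. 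As a subset of $E$, it is countably $\mathcal H^s$-rectifiable with finite measure.

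The key step is a locality property of approximate tangent spaces. For $\mathcal H^s$-a.e.\ $f\in A$, the following three things hold simultaneously:
\begin{enumerate}[label=(\roman*)]
\item $T_fA$ exists;
\item $T_fA=T_fE$;
\item both $\restrinline{\mathrm{d}\Phi_f}{T_fE}$ and the tangential differential of $\restrinline{\Phi}{A}$ exist, and they agree on this common space.
\end{enumerate}
Items (i) and (ii) are standard. At $\mathcal H^s$-a.e.\ point of $A$, the set $A$ has density one in $E$, so the blow-ups $\eta_{f,\lambda}(A)$ and $\eta_{f,\lambda}(E)$ have the same weak limit. One can also obtain this from \cref{L:bi-Lip} combined with Rademacher's theorem on the parametrization domains. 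Item (iii) follows from the independence of the extension in \cite[Lemma~11.5]{maggi2012sets}, since one Lipschitz extension $\overline\Phi$ of $\Phi$ also extends $\restrinline{\Phi}{A}$. Intersecting with the full-measure set where the rank is at least $r$, I pick a point $f\in A$ where all of this holds and $\rankinline{\restrinline{\mathrm{d}\Phi_f}{T_fE}}\ge r$.

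Next I show that $\mathrm{d}\Phi_f(T_fA)\subseteq V$. Take $v\in T_fA$. Since $f$ is a density point and $T_fA$ is the weak limit of the blow-ups, there are points $h_k\in A$ with $h_k\to f$ and $(h_k-f)/|h_k-f|\to v/|v|$. Concretely, if no such sequence existed, a small cone around $v$ would carry zero $\mathcal H^s$-mass of $\eta_{f,\lambda}(A)$ near the unit sphere for all small $\lambda$. This contradicts weak convergence to $\mathcal H^s\llcorner T_fA$, which gives that cone positive mass. Differentiability of $\overline\Phi$ at $f$ then yields
\begin{align*}
\frac{\Phi(h_k)-\Phi(f)}{|h_k-f|}\to \mathrm{d}\Phi_f\!\left(\frac{v}{|v|}\right).
\end{align*}
Because $\Phi(h_k),\Phi(f)\in V$ and $V$ is a closed linear subspace, the limit lies in $V$. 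Hence $\rankinline{\restrinline{\mathrm{d}\Phi_f}{T_fE}}=\dimRinline{\mathrm{d}\Phi_f(T_fA)}\le\dimRinline{V}<r$, which is a contradiction.

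The main obstacle is the locality step, $T_fA=T_fE$ for a.e.\ $f\in A$, together with the existence of approximating sequences along every tangent direction. I would handle it by invoking the locality of approximate tangent spaces for rectifiable sets (e.g.\ \cite{simon2014introduction,maggi2012sets}), which rests on the density theorem $\lim_{\rho\to 0}\mathcal H^s(E\setminus A\cap B_\rho(f))/\rho^s=0$ for a.e.\ $f\in A$. If that route is awkward, the fallback is to pull everything back through the bi-Lipschitz charts of \cref{L:bi-Lip}. There Lebesgue density points of $\psi_j^{-1}(A)\subseteq K_j\subset\mathbb R^s$ and Rademacher's theorem for $\Phi\circ\psi_j$ give the same conclusion directly.
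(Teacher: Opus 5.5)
Your proof is correct and follows the paper's route: you reduce via \cref{L:sepcap} to subspaces $V$ with $\mathcal{H}^s(\Phi^{-1}(V)\cap E)>0$, invoke locality $T_fA=T_fE$ for $\mathcal{H}^s$-a.e.\ $f\in A$, and conclude $\mathrm{d}\Phi_f(T_fE)\subseteq V$; your approximating-sequence argument spells out the inclusion $\mathrm{d}\Phi_f(T_fA)\subseteq T_{\Phi(f)}V=V$, which the paper asserts directly. One precision: a general Lipschitz extension $\overline{\Phi}$ is only guaranteed to be tangentially differentiable along $f+T_fE$ (it may fail to be differentiable at every point of $E$), so in the limit step you should replace $h_k$ by its projection $p_k$ onto $f+T_fE$, note that $\lvert h_k-p_k\rvert=o(\lvert h_k-f\rvert)$ because the directions converge into $T_fE$, and bound $\overline{\Phi}(h_k)-\overline{\Phi}(p_k)$ using the Lipschitz constant.
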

\begin{proof}
    To establish the claim, it suffices to show that for every linear subspace $V\subseteq \mathbb R^{M'}$, 
    \begin{align*}
        \left(\mathcal{H}^s(\Phi^{-1}(V) \cap E) >0\right) \implies  \left(\essinf{f \in E}{\rank{\restr{\mathrm{d}\Phi_f}{T_fE}}} \leq \dimR{V}\right).
    \end{align*}
    Fix a linear subspace $V\subseteq \mathbb R^{M'}$ with
    $\mathcal{H}^s(\Phi^{-1}(V) \cap E)>0$ and set $A \coloneqq \Phi^{-1}(V) \cap E$. 
    By the locality of approximate tangent spaces
    \cite[Proposition 10.5]{maggi2012sets}, $T_fA = T_fE$ for $\mathcal{H}^s$-a.e.
    $f \in A$. Since $\restr{\Phi}{A}\colon A \to V$, we thus have, for
    $\mathcal{H}^s$-a.e. $f \in A$,
    \begin{align*}
        \mathrm{d}\Phi_f(T_fE) = \mathrm{d}\Phi_f(T_fA) \subseteq T_{\Phi(f)}V=V.
    \end{align*}
    But then $\dimRinline{\mathrm{d}\Phi_f(T_fE)} \leq \dimRinline{V}$, for $\mathcal{H}^s$-a.e. $f \in A$, and consequently, it holds that
    \begin{align*}
        \essinf{f \in E}{\dimR{\mathrm{d}\Phi_f(T_fE)}} \leq \dimR{V}.
    \end{align*}
    Upon noting that $\rankinline{\restrinline{\mathrm{d}\Phi_f}{T_fE}} = \dimRinline{\mathrm{d}\Phi_f(T_fE)}$, this completes the proof. 
\end{proof}
\begin{remark}
    The inequality in \cref{eq:lower-bound-tangent} is tight. Indeed, if $\Phi$ is linear and $E$ an $s$-dimensional linear subspace of $\mathbb R^M$, then \cref{eq:lower-bound-tangent} holds with equality, see \cite{haberle2026function}. 
\end{remark}
The subspace characterization of the $s$-separation capacity presented in \cref{L:sepcap} also gives an upper bound in terms of a second-moment matrix. 
\begin{lemma}[Upper bound]\label{L:upper-bound}
    Let $\Phi \colon E \to \mathbb R^{M'}$ be measurable such that $\int_{E} \|\Phi(f)\|^2\,\mathrm{d}\mathcal{H}^s(f) < \infty$. Then, 
    \begin{align}\label{eq:upper-bound}
        \sepcaps{s}{\Phi} \leq 2\cdot\rank{\int_{\mathbb R^{M'}} y y^\mathsf{T}\,\mathrm{d}\pushfwd{\restrmeasure{\mathcal{H}^s}{E}}{\Phi}}.
    \end{align}
\end{lemma}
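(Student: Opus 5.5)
Write $\mu \coloneqq \pushfwd{\restrmeasure{\mathcal{H}^s}{E}}{\Phi}$ and $\Sigma \coloneqq \int_{\mathbb R^{M'}} y y^\mathsf{T}\,\mathrm{d}\mu(y)$. The plan is to exhibit a single linear subspace $V$ with $\mu(V)>0$ and $\dimR{V} \le \rank{\Sigma}$, and then invoke \cref{L:sepcap}. The natural candidate is $V \coloneqq \mathrm{range}(\Sigma)$, whose dimension is exactly $\rank{\Sigma}$.

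First we check that $\Sigma$ is well defined. By the change-of-variables formula for pushforwards, $\int \|y\|^2\,\mathrm{d}\mu(y) = \int_E \|\Phi(f)\|^2\,\mathrm{d}\mathcal{H}^s(f) < \infty$. Every entry satisfies $|y_iy_j| \le \|y\|^2$, so $\Sigma$ is a finite, symmetric, positive semidefinite $M'\times M'$ matrix. Hence $\mathbb R^{M'} = \mathrm{range}(\Sigma) \oplus \ker{\Sigma}$ orthogonally.

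Second, we show that $\mu$ is concentrated on $V$. Take any $v \in \ker{\Sigma}$. Then
\begin{align*}
    0 = v^\mathsf{T}\Sigma v = \int_{\mathbb R^{M'}} \innerprod{v}{y}^2\,\mathrm{d}\mu(y),
\end{align*}
so $\innerprod{v}{y}=0$ for $\mu$-a.e. $y$. Choose an orthonormal basis $v_1,\dots,v_k$ of $\ker{\Sigma}$. A finite union of null sets is null, so $\mu$-a.e. $y$ is orthogonal to all $v_i$, i.e. lies in $V$. Thus $\mu(\mathbb R^{M'}\setminus V)=0$.

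Third, we need $\mu(V)>0$. This follows because $\mu(\mathbb R^{M'}) = \mathcal{H}^s(E) > 0$ by the standing assumption in the definition of $s$-separation capacity; the measurability of $\Phi$ makes $\Phi^{-1}(V)\cap E$ measurable. \cref{L:sepcap} then gives $\sepcaps{s}{\Phi} \le 2\dimR{V} = 2\rank{\Sigma}$.

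The only delicate point I anticipate is the possibility $\mathcal{H}^s(E)=\infty$. In that case $\mu$ is not finite, but the argument above never uses finiteness: the kernel step needs only the integrability of $\|y\|^2$, and positivity of $\mu(V)$ needs only $\mathcal{H}^s(E)>0$. One might worry that $\Sigma=0$ while $\mu\neq 0$. This cannot happen without $\mu$ concentrating on $\{0\}=V$, and then the bound correctly reads $\sepcaps{s}{\Phi}\le 0$, consistent with \cref{L:sepcap}, which gives $0$ when $V=\{0\}$ carries positive mass. So no case split should be necessary.
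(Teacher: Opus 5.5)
Your proof is correct and follows the paper's argument. Both take $V=(\ker{\Sigma})^\perp$, use the identity $v^\mathsf{T}\Sigma v=\int\innerprod{v}{y}^2\,\mathrm{d}\mu(y)$ to get a.e.\ orthogonality to the kernel, use $\mu(\mathbb R^{M'})=\mathcal{H}^s(E)>0$, and conclude via \cref{L:sepcap}. The only difference is minor: the paper shows $\supp{\mu}\subseteq V$ by a neighborhood contradiction and then infers $\mu(V)>0$, whereas your union of finitely many null sets gives $\mu(\mathbb R^{M'}\setminus V)=0$ directly, which is slightly cleaner.
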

\begin{proof}
    \sloppy First note that the RHS of \cref{eq:upper-bound} is well-defined as $\int_{\mathbb R^{M'}} y y^\mathsf{T}\,\mathrm{d}\pushfwdinline{\restrmeasure{\mathcal{H}^s}{E}}{\Phi}= \int_{E} \Phi(f) \Phi(f)^\mathsf{T}\,\mathrm{d}\mathcal{H}^s(f)$ and by the Cauchy--Schwarz inequality, we have, for $k,\ell \in \{1,\ldots,M'\}$,
    \begin{align*}
        \int_E \lvert (\Phi(f))_k (\Phi(f))_\ell\rvert\,\mathrm{d}\mathcal{H}^s(f) &\leq \left(\int_E \lvert (\Phi(f))_k\rvert^2\,\mathrm{d}\mathcal{H}^s(f)\right)^{1/2}\left(\int_E \lvert (\Phi(f))_\ell\rvert^2\,\mathrm{d}\mathcal{H}^s(f)\right)^{1/2} \\
        &\leq \int_{E} \|\Phi(f)\|^2\,\mathrm{d}\mathcal{H}^s(f) < \infty. 
    \end{align*}
    We write $C_\Phi \coloneqq \int_{\mathbb R^{M'}} y y^\mathsf{T}\,\mathrm{d}\pushfwd{\restrmeasure{\mathcal{H}^s}{E}}{\Phi}$ and note that for every $w \in \ker{C_\Phi}$, 
    \begin{align*}
        0 = w^\mathsf{T}C_\Phi w = \int_{\mathbb R^{M'}} \left(\innerprod{y}{w}\right)^2\,\mathrm{d}\pushfwd{\restrmeasure{\mathcal{H}^s}{E}}{\Phi},
    \end{align*}
    which implies
    \begin{align}\label{eq:inner-prod-zero-ae}
        \innerprod{y}{w} = 0, \quad \text{for $\pushfwd{\restrmeasure{\mathcal{H}^s}{E}}{\Phi}$-a.e. $y \in \mathbb R^{M'}$}.
    \end{align}
    Let $V= (\ker{C_\Phi})^\perp$. To establish the claim, it suffices to show that
    \begin{align}\label{eq:supp-V}
        \supp{\pushfwd{\restrmeasure{\mathcal{H}^s}{E}}{\Phi}} \subseteq V.
    \end{align}
    Indeed, \cref{eq:supp-V} implies that $(\pushfwd{\restrmeasure{\mathcal{H}^s}{E}}{\Phi})(V)>0$, as $(\pushfwd{\restrmeasure{\mathcal{H}^s}{E}}{\Phi})(\mathbb R^{M'}) = \mathcal{H}^s(E)>0$, and thus $\sepcaps{s}{\Phi} \leq 2\cdot\dimR{V} = 2\cdot\rank{C_\Phi}$. 
    To see that \cref{eq:supp-V} holds, let $y_0\in \suppinline{\pushfwd{\restrmeasure{\mathcal{H}^s}{E}}{\Phi}}$, and suppose for the sake of contradiction that $y_0 \notin V$, i.e., $\innerprod{y_0}{w} \neq 0$ for some $w \in \ker{C_\Phi}$. But then there is an open neighborhood $N_{y_0}$ containing $y_0$ such that 
    \begin{align*}
        \innerprod{y}{w} \neq 0, \quad \text{for all $y \in N_{y_0}$}.  
    \end{align*}
    As $y_0 \in \suppinline{\pushfwdinline{\restrmeasure{\mathcal{H}^s}{E}}{\Phi}}$, we have $(\pushfwd{\restrmeasure{\mathcal{H}^s}{E}}{\Phi})(N_{y_0})>0$. This is in contradiction to \cref{eq:inner-prod-zero-ae}, thereby completing the proof. 
\end{proof}
Application of the area formula \cite{federer2014geometric} to the RHS of \cref{eq:upper-bound} yields
\begin{align*}
    \sepcaps{s}{\Phi} &\leq 2\cdot\rank{\int_{\mathbb R^{M'}} y y^\mathsf{T}\,\mathrm{d}\pushfwd{\restrmeasure{\mathcal{H}^s}{E}}{\Phi}} \\
    &= 2\cdot\rank{\int_{E} \Phi(f) \Phi(f)^\mathsf{T}\,\mathrm{d}\mathcal{H}^s(f)} \\
    &= 2\cdot\rank{\sum_{j \in \mathcal{J}}\int_{K_j} \Phi(\psi_j(x)) \Phi(\psi_j(x))^{\mathsf{T}} \mathbf{J}(\mathrm{d}(\psi_j)_x)\,\mathrm{d}x},
\end{align*}
where $\mathbf{J}(\mathrm{d}(\psi_j)_x)$ denotes the Jacobian of the linear map $\mathrm{d}(\psi_j)_x$.
\begin{corollary}\label{cor:real-analytic}
    Let $E \subseteq \mathbb R^M$ be countably $\mathcal{H}^s$-rectifiable with bi-Lipschitz parametrization $\{\psi_j \colon K_j \to \psi_j(K_j)\}_{j \in \mathcal{J}}$, and let $\Phi \colon E \to \mathbb R^{M'}$. If, for each $j \in \mathcal{J}$, the map $\Phi \circ \psi_j$
    admits a real-analytic extension to a connected open neighborhood of $K_j$, then
    \begin{align}\label{eq:2ndmoment-real-analytic}
        \sepcaps{s}{\Phi} = 2\min_{j \in \mathcal{J}} \rank{\int_{K_j} \Phi(\psi_j(x)) \Phi(\psi_j(x))^{\mathsf{T}} \mathbf{J}(\mathrm{d}(\psi_j)_x)\,\mathrm{d}x}.
    \end{align}
\end{corollary}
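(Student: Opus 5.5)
We prove the identity \cref{eq:2ndmoment-real-analytic} by establishing the two inequalities separately, using \cref{L:sepcap} for both. Write $G_j \coloneqq \int_{K_j} \Phi(\psi_j(x))\Phi(\psi_j(x))^{\mathsf T}\mathbf{J}(\mathrm{d}(\psi_j)_x)\,\mathrm{d}x$ and $V_j \coloneqq (\ker{G_j})^\perp$.

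\emph{Upper bound ``$\leq$''.} Fix $j \in \mathcal{J}$ and apply the argument of \cref{L:upper-bound} to the restriction $\restrinline{\Phi}{\psi_j(K_j)}$ instead of $\Phi$ on $E$. By the area formula, the second-moment matrix of $\pushfwd{\restrmeasure{\mathcal{H}^s}{\psi_j(K_j)}}{\Phi}$ is exactly $G_j$. Its entries are finite because $K_j$ is compact, $\psi_j$ is bi-Lipschitz (so $\mathbf{J}$ is bounded) and $\Phi\circ\psi_j$ is continuous. The proof of \cref{L:upper-bound} then shows that $\Phi(f) \in V_j$ for $\mathcal{H}^s$-a.e. $f \in \psi_j(K_j)$. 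If $\mathcal{H}^s(\psi_j(K_j))>0$, this gives $(\pushfwd{\restrmeasure{\mathcal{H}^s}{E}}{\Phi})(V_j)>0$, and \cref{L:sepcap} yields $\sepcaps{s}{\Phi}\leq 2\rank{G_j}$. Pieces of measure zero should be discarded from the parametrization at the outset, so that the minimum runs only over $K_j$ with $\mathcal{L}^s(K_j)>0$. Taking the minimum over $j$ gives ``$\leq$''.

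\emph{Lower bound ``$\geq$''.} Let $V$ be a subspace with $\mathcal{H}^s(\Phi^{-1}(V)\cap E)>0$. Since the $\psi_j(K_j)$ cover $E$ up to a null set, some $j$ satisfies $\mathcal{H}^s(\Phi^{-1}(V)\cap\psi_j(K_j))>0$. Because $\psi_j$ is bi-Lipschitz, $B\coloneqq\{x\in K_j:\Phi(\psi_j(x))\in V\}$ has $\mathcal{L}^s(B)>0$. Let $g$ be the real-analytic extension of $\Phi\circ\psi_j$ to the connected open set $U\supseteq K_j$. For each $w\in V^\perp$, the function $x\mapsto\innerprod{g(x)}{w}$ is real-analytic on $U$ and vanishes on $B$, a set of positive Lebesgue measure. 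Since the zero set of a nonzero real-analytic function on a connected open set is $\mathcal{L}^s$-null, this function vanishes identically on $U$, so $g(U)\subseteq V$. In particular $\Phi(\psi_j(x))\in V$ for every $x\in K_j$. Hence every $v\in V^\perp$ satisfies $G_j v=0$, so $\mathrm{range}(G_j)\subseteq V$ and $\rank{G_j}\leq\dimR{V}$. Minimizing over $V$ and invoking \cref{L:sepcap} gives $\sepcaps{s}{\Phi}\geq 2\min_j\rank{G_j}$.

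The main obstacle is the lower bound. Its key input is the identity theorem for real-analytic functions, namely that a nontrivial analytic function has a Lebesgue-null zero set. This is the only place where analyticity and connectedness of the neighbourhood are used, and it is what upgrades ``a.e. on a positive-measure subset'' to ``everywhere on $K_j$''. A minor point to check is that the attainment of the minimum over $j$ and the measurability of $\Phi$ cause no trouble. The minimum is attained because ranks are integers, and $\Phi$ is measurable on each piece because $\Phi\circ\psi_j$ is continuous.
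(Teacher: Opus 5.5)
Your proof is correct and uses the same two ingredients as the paper: \cref{L:upper-bound} applied to each piece $\psi_j(K_j)$ for ``$\leq$'', and the identity theorem for real-analytic functions for ``$\geq$''. The differences are organizational: you localize to pieces directly via \cref{L:sepcap} and countable subadditivity rather than citing $\sepcaps{s}{\Phi}=\min_{j}\sepcapsinline{s}{\restrinline{\Phi}{\psi_j(K_j)}}$, and you show directly that any subspace charging a piece contains $\Phi(\psi_j(K_j))$ instead of using the paper's dimension-counting contradiction; your caveat that pieces with $\mathcal{L}^s(K_j)=0$ must be discarded is correct, since the statement as written leaves this assumption implicit.
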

\begin{proof}
    Since $\Phi \circ \psi_j$ is in particular continuous and $K_j$ is compact, each integral in \cref{eq:2ndmoment-real-analytic} is finite.
    As established in \cite{haberle2026function}, $\sepcaps{s}{\Phi} = \min_{j \in \mathcal{J}} \sepcapsinline{s}{\restrinline{\Phi}{\psi_j(K_j)}}$. Application of \cref{L:upper-bound} to $\restrinline{\Phi}{\psi_j(K_j)}$ yields
    \begin{align}\label{eq:real-analytic1}
        \sepcaps{s}{\restr{\Phi}{\psi_j(K_j)}} \leq 2\cdot\rank{\underbrace{\int_{\mathbb R^{M'}} y y^\mathsf{T}\,\mathrm{d}\pushfwd{\restrmeasure{\mathcal{H}^s}{\psi_j(K_j)}}{\Phi}}_{\eqqcolon C_{\Phi_j}}}. 
    \end{align}
    To show that \cref{eq:real-analytic1} holds with equality, suppose for the sake of contradiction that there is a linear subspace $U \subseteq \mathbb R^{M'}$ with $\dimRinline{U} < \rankinline{C_{\Phi_j}}$ and $(\pushfwdinline{\restrmeasure{\mathcal{H}^s}{\psi_j(K_j)}}{\Phi})(U)>0$. Further, let $V = (\ker{C_{\Phi_j}})^\perp$, so that $\dimRinline{V} = \rankinline{C_{\Phi_j}}$ and $(\pushfwdinline{\restrmeasure{\mathcal{H}^s}{\psi_j(K_j)}}{\Phi})(V)>0$, as argued in the proof of \cref{L:upper-bound}. Since $\dimRinline{U^\perp} > M'- \rankinline{C_{\Phi_j}}$ and $\dimRinline{V} = \rankinline{C_{\Phi_j}}$,
    \begin{align*}
        \dimR{U^\perp \cap V} &= \dimR{U^\perp}+\dimR{V}-\dimR{U^\perp + V} \\
        &> M'- \rankinline{C_{\Phi_j}} + \rankinline{C_{\Phi_j}} -\dimR{U^\perp + V} \\
        &= M'- \dimR{U^\perp + V} \\
        &\geq 0.
    \end{align*}
    We next note that $\Phi(\psi_j(K_j))^\perp \subseteq \ker{C_{\Phi_j}} = V^\perp$, so that $V \subseteq \spanRinline{\Phi(\psi_j(K_j))}$.
    One can thus choose $v \in V \cap U^\perp$ such that $v\neq 0$ and such that there is an $x_v \in K_j$ satisfying $\innerprod{\Phi(\psi_j(x_v))}{v} \neq 0$. Setting $\varphi(x) \coloneqq \innerprod{(\Phi \circ \psi_j)(x)}{v}$, $x \in K_j$, then, by assumption, $\varphi$ extends real-analytically to a connected open
    neighborhood $\Omega_j$ of $K_j$. Moreover, $\varphi$ is not identically zero, as $\varphi(x_v) \neq 0$. However, by construction, $\varphi(x) = 0$, for all $x \in (\Phi \circ \psi_j)^{-1}(U)$. This establishes the contradiction, as a nontrivial real-analytic function on the connected open set $\Omega_j$ cannot
    vanish \cite{mityagin2020zerorealanalytic} on a set of measure   
    \begin{align*}
        \mathcal{L}^s\!\left((\Phi \circ \psi_j)^{-1}(U)\right)  &\geq \mathcal{H}^s\!\left(\psi_j\left((\Phi \circ \psi_j)^{-1}(U)\right)\right) (\lip{\psi_j})^{-s} \\ 
        &= \mathcal{H}^s\!\left(\Phi^{-1}(U) \cap \psi_j(K_j)\right) \\
        &= \left(\pushfwd{\restrmeasure{\mathcal{H}^s}{\psi_j(K_j)}}{\Phi}\right)(U)\\
        &>0. 
    \end{align*}
    Therefore, \cref{eq:real-analytic1} holds with equality. Application of the area formula yields the desired expression, and the proof is complete. 
\end{proof}
\begin{corollary}\label{cor:lower-bound-real-analytic}
    Under the hypotheses of \cref{cor:real-analytic}, 
    \begin{align*}
        \sepcaps{s}{\Phi} \geq \inf_{j \in \mathcal{J}} \frac{2\left(\int_{K_j} \lVert \Phi(\psi_j(x))\rVert^2\mathbf{J}(\mathrm{d}(\psi_j)_x)\,\mathrm{d}x\right)^2}{\int_{K_j}\int_{K_j}  \lvert\langle \Phi(\psi_j(x)), \Phi(\psi_j(y))\rangle\rvert^2\mathbf{J}(\mathrm{d}(\psi_j)_x)\mathbf{J}(\mathrm{d}(\psi_j)_y)\,\mathrm{d}x\mathrm{d}y}.
    \end{align*}
\end{corollary}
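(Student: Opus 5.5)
The plan is to combine \cref{cor:real-analytic} with the elementary inequality $\rank{C} \geq (\traceinline{C})^2/\traceinline{C^2} = (\traceinline{C})^2/\|C\|_F^2$, valid for every nonzero positive semidefinite matrix $C$. By \cref{cor:real-analytic}, $\sepcaps{s}{\Phi} = 2\min_{j}\rank{C_j}$, where
\begin{align*}
C_j \coloneqq \int_{K_j} \Phi(\psi_j(x))\Phi(\psi_j(x))^{\mathsf{T}}\mathbf{J}(\mathrm{d}(\psi_j)_x)\,\mathrm{d}x .
\end{align*}
It therefore suffices to bound each $\rank{C_j}$ from below by the $j$-th term of the infimum. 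The minimum over $j$ is then at least the infimum of these lower bounds.

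First, I will verify the rank inequality. Let $\lambda_1,\ldots,\lambda_r>0$ be the nonzero eigenvalues of $C_j$, where $r=\rank{C_j}$. Cauchy--Schwarz gives $(\sum_i\lambda_i)^2 \leq r\sum_i\lambda_i^2$. If $C_j=0$, the right-hand side of the corollary for that $j$ has the form $0/0$. I would treat this case by convention or exclude it, noting that the displayed ratio is meaningful exactly when the integrals are positive. This degenerate case is the only delicate point.

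Next, I will compute the two traces. Since $\Phi\circ\psi_j$ is continuous on the compact set $K_j$ and the Jacobian is bounded (because $\psi_j$ is bi-Lipschitz), all integrals are finite. Linearity of trace gives
\begin{align*}
\traceinline{C_j}=\int_{K_j}\|\Phi(\psi_j(x))\|^2\mathbf{J}(\mathrm{d}(\psi_j)_x)\,\mathrm{d}x.
\end{align*}
For the squared Frobenius norm, I will write $C_j^2$ as a double integral and use Fubini:
\begin{align*}
\traceinline{C_j^2}=\int_{K_j}\int_{K_j}\traceinline{\Phi(\psi_j(x))\Phi(\psi_j(x))^{\mathsf{T}}\Phi(\psi_j(y))\Phi(\psi_j(y))^{\mathsf{T}}}\,\mathbf{J}(\mathrm{d}(\psi_j)_x)\mathbf{J}(\mathrm{d}(\psi_j)_y)\,\mathrm{d}x\,\mathrm{d}y .
\end{align*}
The integrand trace equals $\langle\Phi(\psi_j(x)),\Phi(\psi_j(y))\rangle^2$.

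Substituting these expressions into the rank inequality and multiplying by $2$ yields the stated bound for each $j$. Taking the minimum over $j$ then completes the argument. The main care is in justifying Fubini, which holds because the integrand is nonnegative and bounded on $K_j\times K_j$, and in the degenerate case above. Otherwise the argument is routine.
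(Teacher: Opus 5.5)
Your proposal is correct and matches the paper's proof: both bound $\rank{C_j}$ below by $\traceinline{C_j}^2/\traceinline{C_j^2}$ via Cauchy--Schwarz on the nonzero eigenvalues, compute the two traces as the stated single and double integrals, and then invoke \cref{cor:real-analytic}. Your remark that the ratio becomes $0/0$ when $C_j=0$ is a real point that the paper does not address; in that case \cref{cor:real-analytic} gives $\sepcaps{s}{\Phi}=0$, so the bound holds only under a convention for that term.
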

\begin{proof}
    The matrix $C_{\Phi_j} \coloneqq \int_{K_j} \Phi(\psi_j(x)) \Phi(\psi_j(x))^{\mathsf{T}} \mathbf{J}(\mathrm{d}(\psi_j)_x)\,\mathrm{d}x$ is symmetric, so that $r_j\coloneqq\rank{C_{\Phi_j}}$ is given by the number of nonzero eigenvalues of $C_{\Phi_j}$. Denoting by $\{\lambda _k\}_{k=1}^{r_j}$ the nonzero eigenvalues of $C_{\Phi_j}$, we have 
    \begin{align*}
        \trace{C_{\Phi_j}} = \sum_{k=1}^{r_j} \lambda_k\quad \text{and}\quad 
        \trace{C_{\Phi_j}^2} = \sum_{k=1}^{r_j} \lambda_k^2.
    \end{align*}
    Application of the Cauchy--Schwarz inequality yields
    \begin{align*}
        \trace{C_{\Phi_j}}^2 = \left(\sum_{k=1}^{r_j} \lambda_k\right)^2 \leq r_j \sum_{k=1}^{r_j} \lambda_k^2 = r_j \trace{C_{\Phi_j}^2},
    \end{align*}
    so that
    \begin{align*}
        \rank{C_{\Phi_j}} \geq \frac{\trace{C_{\Phi_j}}^2}{\trace{C_{\Phi_j}^2}}.
    \end{align*}
    Computing 
    \begin{align*}
        \trace{C_{\Phi_j}} = \int_{K_j} \trace{\Phi(\psi_j(x)) \Phi(\psi_j(x))^{\mathsf{T}}} \mathbf{J}(\mathrm{d}(\psi_j)_x)\,\mathrm{d}x = \int_{K_j} \lVert \Phi(\psi_j(x))\rVert^2\mathbf{J}(\mathrm{d}(\psi_j)_x)\,\mathrm{d}x  
    \end{align*}
    and 
    \begin{align*}
        \trace{C_{\Phi_j}^2} &= \trace{\int_{K_j}\int_{K_j} \Phi(\psi_j(x)) \Phi(\psi_j(x))^{\mathsf{T}} \Phi(\psi_j(y)) \Phi(\psi_j(y))^{\mathsf{T}} \mathbf{J}(\mathrm{d}(\psi_j)_x)\mathbf{J}(\mathrm{d}(\psi_j)_y)\,\mathrm{d}x\mathrm{d}y} \\
        &= \int_{K_j} \int_{K_j} \lvert\langle \Phi(\psi_j(x)), \Phi(\psi_j(y))\rangle\rvert^2\mathbf{J}(\mathrm{d}(\psi_j)_x)\mathbf{J}(\mathrm{d}(\psi_j)_y)\,\mathrm{d}x\mathrm{d}y
    \end{align*}
    gives the desired bound upon application of \cref{cor:real-analytic}.
\end{proof}
\section{Application: Separation capacity of scattering networks}\label{sec:sc-snet}
\subsection{Scattering network theory}
We now instantiate the preceding capacity bounds for finite-dimensional scattering networks. Input signals are functions $f \colon \mathbb Z/M\mathbb Z \to \mathbb C$. 
Scattering networks à la Wiatowski and B\"olcskei \cite{wiatowski2017deep} are built from sequences of the form $\{(\Psi_n,\rho_n,P_n)\}_{n \in \mathbb N}$, where the $n$th network layer, $n \in \mathbb N$, is determined by the triplet $(\Psi_n,\rho_n,P_n)$ consisting of (i) a frame $\Psi_n$ generated by a countable set of functions $\{\chi_n\} \cup \{g_{\lambda_n}\}_{\lambda_n \in \Lambda_n} \subseteq \mathbb C^{\mathbb Z/M \mathbb Z}$ such that
\begin{align}\label{eq:frame-condition}
    A_n \lVert f \rVert^2 \leq \lVert f \ast \chi_n \rVert^2 +\sum_{\lambda_n \in \Lambda_n} \lVert f \ast g_{\lambda_n} \rVert^2 \leq B_n \lVert f \rVert^2, \quad \text{for all $f \in \mathbb C^{\mathbb Z/M \mathbb Z}$},
\end{align}
with constants $0<A_n\leq B_n<\infty$, (ii) a pointwise nonlinearity $\rho_n \colon \mathbb C \to \mathbb C$, and (iii) a pooling operator $P_n \colon \mathbb C^{\mathbb Z/M \mathbb Z} \to \mathbb C^{\mathbb Z/M \mathbb Z}$.
The operation in the node $\lambda_n \in \Lambda_n$ in the $n$th network layer is set to be 
\begin{align}\label{eq:operation-node}
    U[\lambda_n]\colon f \mapsto P_n(\rho_n(f \ast g_{\lambda_n})), \quad f \in \mathbb C^{\mathbb Z/M \mathbb Z},
\end{align}
where $\left(\rho_n(f \ast g_{\lambda_n})\right)(k)\coloneqq \rho_n((f \ast g_{\lambda_n})(k))$, $k \in \mathbb Z/M\mathbb Z$. Extending \cref{eq:operation-node} according to
\begin{align*}
    U[(\lambda_1,\ldots,\lambda_n)]f = U[\lambda_n]\cdots U[\lambda_1]f, \quad f \in \mathbb C^{\mathbb Z/M\mathbb Z},
\end{align*}
for $(\lambda_1,\ldots,\lambda_n) \in \Lambda_1 \times \cdots\times \Lambda_n \eqqcolon \Lambda_1^n$, and setting $\Lambda_1^0 \coloneqq \{e\}$ as well as $U[e]f \coloneqq f$, for all $f \in \mathbb C^{\mathbb Z/M\mathbb Z}$, 
the \emph{scattering network of depth $n_{\mathrm{d}} \in \mathbb N$} is given by 
\begin{align*}
    \Phi \colon \mathbb C^{\mathbb Z/M \mathbb Z} \to \left(\mathbb C^{\mathbb Z/M \mathbb Z}\right)^{\bigcup_{n=0}^{n_{\mathrm{d}}} \Lambda_1^n}, \quad f \mapsto \bigcup_{n=0}^{n_{\mathrm{d}}} \Phi^n(f). 
\end{align*}
Here, $\Phi^n(f) \coloneqq \{(U[q]f) \ast \chi_{n+1}\}_{q \in \Lambda_1^n}$ denotes the output of the $n$th network layer.

\paragraph{Computing the separation capacity of scattering networks} 
To analyze the separation capacity, we shall identify $\mathbb{Z}/M\mathbb{Z}$ with
$\{0,\ldots,M-1\}$ equipped with addition modulo $M$, and employ the usual chain of identifications $\mathbb C^{\mathbb Z/M\mathbb Z} \simeq \mathbb C^M \simeq \mathbb R^{2M}$, so that we are in the setting of the previous section. To see what these identifications mean in the context of binary classification based on the feature extractor $\Phi \colon E \subseteq \mathbb C^M \to \mathbb C^{M'}$, we note that for $f \in E$ and $w \in \mathbb C^{M'}$,
\begin{align}\label{eq:inner-prod-real-complex}
    \Re(\innerprod{\Phi(f)}{w}) = \innerprod{\Re(\Phi(f))}{\Re(w)} + \innerprod{\Im(\Phi(f))}{\Im(w)} = \innerprod{\begin{pmatrix}
        \Re(\Phi(f)) \\ \Im(\Phi(f))
    \end{pmatrix}}{\begin{pmatrix}
        \Re(w) \\ \Im(w)
    \end{pmatrix}}.
\end{align}
This motivates the following notion of $\Phi$-separability for complex-valued maps $\Phi \colon E \subseteq \mathbb C^M \to \mathbb C^{M'}$. A dichotomy $\{F_+,F_-\}$ of an $N$-point set $F \subseteq E$ is $\Phi$-separable if there is a $w \in \mathbb C^{M'}$ such that\footnote{Equivalently, one may consider the sign of $\Im(\innerprod{\Phi(f)}{w})$.} 
\begin{align*}
    \Re(\innerprod{\Phi(f)}{w}) &>0, \quad \text{if $f \in F_+$},\\
    \Re(\innerprod{\Phi(f)}{w}) &<0, \quad \text{if $f \in F_-$}.
\end{align*}
Furthermore, we define the $s$-separation capacity of $\Phi$ as the $s$-separation capacity of the map 
\begin{align*}
    \widetilde{\Phi}\colon \widetilde{E} \subseteq \mathbb R^{2M} \to \mathbb R^{2M'}, \begin{pmatrix}
        f' \\ f''
    \end{pmatrix} \mapsto\begin{pmatrix}
        \Re(\Phi(f'+if'')) \\
        \Im(\Phi(f'+if''))
    \end{pmatrix}, 
\end{align*} 
where $\widetilde{E} \coloneqq \iota(E)$ and $\iota \colon \mathbb{C}^M \to \mathbb{R}^{2M}$, $f \mapsto (\Re(f)^\mathsf{T},\Im(f)^\mathsf{T})^\mathsf{T}$, denotes the decomplexification map. We shall write $\sepcapsinline{s}{\Phi} \coloneqq \sepcapsinline{s}{\widetilde{\Phi}}$.

We henceforth restrict ourselves to networks built from $\{(\Psi,\rho,\mathrm{Id})\}_{n \in \mathbb N}$, where $\Psi$ is the frame generated by some functions $\{\chi\} \cup \{g_{\lambda}\}_{\lambda \in \Lambda} \subseteq \mathbb C^{\mathbb Z/M \mathbb Z}$ in the sense of \cref{eq:frame-condition}, and $\rho(z)=z^d$, $z \in \mathbb C$, is a fixed monomial. In this setting, $\Lambda_1^n = \Lambda^n$, the $n$-fold Cartesian product of $\Lambda$, and $\Lambda_1^0 = \Lambda^0 \coloneqq \{e\}$.
Given the input dataset $E \subseteq \mathbb C^{M} \simeq\mathbb C^{\mathbb Z/M\mathbb Z}$ and the parameter $s \geq 0$, the design problem is to choose the frame $\Psi$ so as to maximize $\sepcaps{s}{\Phi}$. We treat two model geometries: sparse signals and polynomially parametrized rectifiable sets.

\subsection{Sparse signals}
We first take $E$ to be a sparse-signal model. Let $\Xi = \{\xi_k\}_{k \in \mathcal{K}}$ be a frame for $\mathbb C^M$, and suppose that every subset of cardinality $s$ of $\Xi$ is linearly independent, so that for each $S \subseteq \mathcal{K}$ with $\lvert S\rvert =s$, $E_S\coloneqq \spanCinline{\{\xi_k\}_{k \in S}}$ is of dimension $s$. The set of $s$-sparse signals is given as $E \coloneqq \bigcup_{S \subseteq \mathcal{K}, \lvert S\rvert =s} E_S$.
After decomplexification according to $\widetilde{E} = \iota(E) \subset \mathbb R^{2M}$, this model becomes a union of $2s$-dimensional $\mathbb R$-linear subspaces: 
\begin{align}
    \widetilde{E} = \bigcup_{S \subseteq \mathcal{K}, \lvert S\rvert =s} \spanR{\left\{\begin{pmatrix}
        \Re(\xi_k) \\ \Im(\xi_k)
    \end{pmatrix},\begin{pmatrix}
        -\Im(\xi_k) \\ \Re(\xi_k) 
    \end{pmatrix}\right\}_{k \in S}}.
\end{align}
Indeed, if $\{c_k',c_k''\}_{k \in S} \subset \mathbb R$ satisfy
\begin{align*}
    \sum_{k \in S} \left(c_k' \begin{pmatrix}
        \Re(\xi_k) \\ \Im(\xi_k)
    \end{pmatrix}+ c_k ''\begin{pmatrix}
        -\Im(\xi_k) \\ \Re(\xi_k) 
    \end{pmatrix} \right) =0, 
\end{align*}
then 
\begin{align*}
    \sum_{k \in S} (c_k'+ic_k'')\xi_k =0. 
\end{align*}
The $\mathbb C$-linear independence of $\{\xi_k\}_{k\in S}$ yields $c_k'=c_k''=0$, for all $k \in S$, and hence the set 
\begin{align}\label{eq:dimR-2s}
    \left\{\begin{pmatrix}
        \Re(\xi_k) \\ \Im(\xi_k)
    \end{pmatrix},\begin{pmatrix}
        -\Im(\xi_k) \\ \Re(\xi_k) 
    \end{pmatrix}\right\}_{k \in S} \quad \text{ is linearly independent.}
\end{align}
In particular, the set of $s$-sparse signals in $\mathbb C^M$ is countably $\mathcal{H}^{2s}$-rectifiable.
\paragraph{Upper bound on $\sepcaps{2s}{\Phi}$} The upper bound starts from an exact expression for $\sepcaps{2s}{\Phi}$ in terms of complex spans. 
\begin{proposition}\label{prop:sc-complex}
    Let $\Phi \colon E \to \mathbb C^{M'}$ be real-analytic. Then,
    \begin{align*}
        \sepcaps{2s}{\Phi} = 2\min_{\substack{S \subseteq \mathcal{K} \\ \lvert S \rvert =s}} \dimC{\spanC{\left\{\begin{pmatrix}
        \varphi^\mathsf{T},\overline{\varphi}^\mathsf{T}
    \end{pmatrix}^\mathsf{T}\colon \varphi \in \Phi(E_S)\right\}}}. 
    \end{align*}
\end{proposition}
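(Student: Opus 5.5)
The plan is to reduce to \cref{cor:real-analytic} applied to the decomplexified map $\widetilde{\Phi}$ on $\widetilde{E}$. I take as bi-Lipschitz parametrization the family indexed by the $s$-subsets $S \subseteq \mathcal{K}$ (finitely many). Each chart is the linear injection $\psi_S \colon K_S \to \widetilde{E}$, $(c',c'') \mapsto \iota\bigl(\sum_{k\in S}(c_k'+ic_k'')\xi_k\bigr)$, restricted to a closed ball $K_S \subset \mathbb R^{2s}$; it is injective by \cref{eq:dimR-2s}. Two adjustments are needed. First, the overlaps $E_S\cap E_{S'}$ are lower-dimensional subspaces and hence $\mathcal{H}^{2s}$-null, so disjointness holds up to null sets. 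Second, the cover of $E_S$ by countably many balls can be replaced by a single ball, because the separation capacity is local and the zero set argument below is scale invariant. Since $\Phi$ is real-analytic and $\psi_S$ is linear, $\widetilde{\Phi}\circ\psi_S$ extends real-analytically to all of $\mathbb R^{2s}$, which is connected. The Jacobian $\mathbf{J}(\mathrm d\psi_S)$ is a positive constant. \Cref{cor:real-analytic} then gives
\begin{align*}
\sepcaps{2s}{\Phi} = 2\min_{S}\rank{\int_{K_S}\widetilde{\Phi}(\psi_S(x))\widetilde{\Phi}(\psi_S(x))^{\mathsf T}\,\mathrm dx}.
\end{align*}

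Next I identify each rank with a real span dimension. The kernel argument of \cref{L:upper-bound} shows that the rank equals $\dimR{\spanR{\widetilde{\Phi}(\psi_S(K_S))}}$. Indeed, $w$ lies in the kernel iff $\langle \widetilde{\Phi}(\psi_S(x)),w\rangle=0$ for a.e.\ $x$, hence for all $x\in K_S$ by continuity. By real-analyticity, vanishing on the ball $K_S$ is equivalent to vanishing on all of $\mathbb R^{2s}$. Therefore the rank equals $\dimR{\spanR{\iota(\Phi(E_S))}}$, where $\iota$ is now applied on $\mathbb C^{M'}$.

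The remaining step, which I expect to be the main (though elementary) obstacle, is the linear-algebra identity
\begin{align*}
\dimR{\spanR{\{\iota(\varphi):\varphi\in A\}}}=\dimC{\spanC{\{(\varphi^{\mathsf T},\overline{\varphi}^{\mathsf T})^{\mathsf T}:\varphi\in A\}}}
\end{align*}
for any $A\subseteq\mathbb C^{M'}$. I would prove it via the map $T\colon \mathbb C^{M'}\to\mathbb C^{2M'}$, $\varphi\mapsto(\varphi,\overline{\varphi})$. This map is $\mathbb R$-linear and injective, and its image $W=\{(u,\overline u)\}$ is a real form of $\mathbb C^{2M'}$: $W\cap iW=\{0\}$ and $W+iW=\mathbb C^{2M'}$. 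The key fact is that for a real subspace $R\subseteq W$, the complex span satisfies $\dimC{\spanC{R}}=\dimR{R}$. To see this, take a real basis of $R$ and suppose it admits a complex dependence $\sum (a_j+ib_j)r_j=0$. Splitting into the $W$ and $iW$ components, using the directness of $W\oplus iW$, gives $\sum a_jr_j=0$ and $\sum b_jr_j=0$, so all coefficients vanish. Applying this to $R=T(\spanR{A})$ and using that $\iota$ and $T$ are both $\mathbb R$-isomorphisms onto their images yields the identity.

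Combining this identity with the rank formula proves the proposition.
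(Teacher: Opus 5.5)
Your proof is correct up to two technical points, and its middle step takes a somewhat different route from the paper's.

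The paper cites \cite{haberle2026function} twice. First it reduces $\sepcaps{2s}{\widetilde{\Phi}}$ to $\min_S \sepcap{\widetilde{\Phi}\circ\widetilde{\sigma}_S}$, with $\widetilde{\sigma}_S$ defined on all of $\mathbb{R}^{2s}$. Then it cites the fact that a real-analytic map on $\mathbb{R}^{2s}$ has separation capacity equal to twice the real dimension of the span of its image. It finishes with the invertible matrix $T$ sending $(\Re\varphi,\Im\varphi)$ to $(\varphi,\overline{\varphi})$.

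You instead go through \cref{cor:real-analytic}, which takes only the chart reduction from that reference, and you derive the span formula yourself. The kernel of the second-moment matrix is the orthogonal complement of the image of the chart, and analytic continuation from a set of positive measure replaces the chart by all of $E_S$. Your real-form argument ($W\cap iW=\{0\}$, $W+iW=\mathbb{C}^{2M'}$) is the same linear algebra the paper compresses into ``as $T$ is invertible''. That phrase tacitly uses that $\mathbb{R}$-independent real vectors remain $\mathbb{C}$-independent; you just spell it out.

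The paper's route is shorter, because the cited result works on the whole subspace and needs no compact charts or covers. Yours is more self-contained within the paper and shows explicitly why the answer cannot depend on the chosen charts.

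The two points to tighten are these.

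First, ``local'' and ``scale invariant'' are not the right justification for using a single ball per $S$. Take a genuine countable cover of $\mathbb{R}^{2s}$ by compact sets $K_{S,j}$ of positive measure, e.g.\ unit cubes. Your own kernel and continuation argument gives $\mathrm{rank}(C_{S,j})=\dim_{\mathbb{R}}\mathrm{span}_{\mathbb{R}}(\iota(\Phi(E_S)))$ for every $j$, so the minimum over $j$ collapses to that common value.

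Second, your claim that the overlaps $E_S\cap E_{S'}$ are null is false in general. Distinct $S,S'$ can give $E_S=E_{S'}$, e.g.\ $\Xi=\{e_1,e_2,e_1+e_2\}$ in $\mathbb{C}^2$ with $s=2$. This is harmless, and either of two fixes works:
\begin{itemize}
\item index the charts by the distinct subspaces, since the right-hand side depends on $S$ only through $E_S$;
\item or note that the chart reduction behind \cref{cor:real-analytic} follows directly from \cref{eq:sepcap-form2} for any countable cover by pieces of positive measure, disjoint or not. This also takes care of the shared faces of the cubes.
\end{itemize}
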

\begin{proof}
Thanks to \cref{eq:dimR-2s}, we can employ \cite{haberle2026function} to get
\begin{align}\label{eq:sc-sparse-1}
    \sepcaps{2s}{\Phi} &= \sepcaps{2s}{\widetilde\Phi}
    = \min_{\substack{S \subseteq \mathcal{K} \\ \lvert S \rvert =s}} \sepcap{\widetilde\Phi \circ \widetilde\sigma_S},
\end{align}
where
\begin{align*}
    \widetilde\sigma_S \colon \mathbb R^{2s}\to \mathbb R^{2M}, \begin{pmatrix}
        c' \\ c''
    \end{pmatrix} \mapsto \sum_{j=1}^s \left(c_j' \begin{pmatrix}
        \Re(\xi_{k_{S,j}}) \\ \Im(\xi_{k_{S,j}})
    \end{pmatrix}+ c_j'' \begin{pmatrix}
        -\Im(\xi_{k_{S,j}}) \\ \Re(\xi_{k_{S,j}}) 
    \end{pmatrix} \right)
\end{align*}
with the labeling $S=\{{k_{S,j}}\}_{j=1}^s$. 
Further, as $\widetilde{\Phi}\circ \widetilde{\sigma}_S$ is real-analytic, we have, by \cite{haberle2026function},
\begin{align}\label{eq:sc-sparse-2}
    \sepcap{\widetilde\Phi \circ \widetilde\sigma_S} &= 2\cdot \dimR{\spanR{(\widetilde{\Phi}\circ \widetilde{\sigma}_S)(\mathbb R^{2s})}}.
\end{align}
Let $\sigma_S \colon \mathbb C^s \to \mathbb C^M, c \mapsto \sum_{j=1}^s c_j \xi_{k_{S,j}}$ and  $T \coloneqq \begin{pmatrix}
    I_{M'} &iI_{M'} \\I_{M'} &-iI_{M'}
\end{pmatrix}$. 
Then, for $\begin{pmatrix}
        c' \\ c''
    \end{pmatrix} \in \mathbb R^{2s}$,  
\begin{align*}
    \begin{pmatrix}
        (\Phi\circ\sigma_S)(c'+ic'') \\
        \overline{(\Phi\circ\sigma_S)(c'+ic'')}
    \end{pmatrix} = T (\widetilde{\Phi}\circ\widetilde{\sigma}_S)(c'+ic'').
\end{align*}
As $T$ is invertible, we have
\begin{align} \label{eq:sc-sparse-3}
    \dimR{\spanR{(\widetilde{\Phi}\circ \widetilde{\sigma}_S)(\mathbb R^{2s})}} = \dimC{\spanC{\left\{\begin{pmatrix}
        (\Phi\circ \sigma_S)(c) \\ \overline{(\Phi\circ \sigma_S)(c)}
    \end{pmatrix}\colon c \in \mathbb C^s\right\}}}.
\end{align}
Combining \cref{eq:sc-sparse-1,eq:sc-sparse-2,eq:sc-sparse-3} completes the proof.
\end{proof}
Thus, maximizing $\sepcaps{2s}{\Phi}$ requires  
\begin{align*}
    \dimC{\spanC{\left\{\begin{pmatrix}
        (\Phi\circ \sigma_S)(c) \\ \overline{(\Phi\circ \sigma_S)(c)}
    \end{pmatrix}\colon c \in \mathbb C^s\right\}}} \text{ to be maximized for all $S\subseteq \mathcal{K}$ with $\lvert S \rvert=s$.}
\end{align*}

To derive an upper bound on $\sepcaps{2s}{\Phi}$, we establish a one-node estimate. Consider the map $f \mapsto \rho(f\ast g_\lambda)$, $f \in E_S$, and set $H_{\lambda,S} \coloneqq (\bigcup_{k \in S} \suppinline{\widehat{\xi_k}}) \cap \suppinline{\widehat{g_\lambda}}$. 
The $d$-fold sumset of $H_{\lambda,S} \subseteq \mathbb Z/M\mathbb Z$, formed with
respect to the group operation $+$ on $\mathbb{Z}/M\mathbb{Z}$, is denoted by
\begin{align*}
    H_{d,\lambda,S} \coloneqq \langle H_{\lambda,S} \rangle_d \coloneqq
    \underbrace{H_{\lambda,S} + \cdots + H_{\lambda,S}}_{d \text{ times}}
    = \{h_1 + \cdots + h_d \colon h_1,\ldots,h_d \in H_{\lambda,S}\}.
\end{align*}
\begin{lemma}\label{L:UP_one_node}
    For $S \subseteq \mathcal{K}$ with $\lvert S \rvert =s$, and 
    $\lambda \in \Lambda$, it holds that  
    \begin{align}\label{eq:bound_struct_input_1node}
        \dimC{\spanC{\left\{\rho(f \ast g_\lambda) \colon f \in E_S\right\}}} \leq \min \left\{\binom{s-1+d}{d}, \left\lvert H_{d,\lambda,S}\right\rvert\right\}.
    \end{align}
    Furthermore, we have
    \begin{align}\label{eq:bound_1node_filtered}
        \dimC{\spanC{\left\{\rho(f \ast g_\lambda)\ast \chi \colon f \in E_S\right\}}} \leq \min \left\{\binom{s-1+d}{d}, \left\lvert H_{d,\lambda,S} \cap \supp{\widehat{\chi}}\right\rvert\right\}.
    \end{align}
\end{lemma}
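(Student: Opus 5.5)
Both bounds follow from writing $\rho(f\ast g_\lambda)$ for $f \in E_S$ explicitly as a vector of homogeneous degree-$d$ polynomials in the coefficients, and then counting dimensions in two ways: once in coefficient space, once in frequency space.

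\emph{Setup.} Write $f=\sigma_S(c)=\sum_{j=1}^s c_j\xi_{k_{S,j}}$ with $c\in\mathbb C^s$. Then
\[
  f\ast g_\lambda=\sum_{j=1}^s c_j\,(\xi_{k_{S,j}}\ast g_\lambda),
\]
so each entry $(f\ast g_\lambda)(m)$ is a $\mathbb C$-linear form in $c$. Since $\rho(z)=z^d$ acts pointwise, every entry of $\rho(f\ast g_\lambda)$ is a homogeneous polynomial of degree $d$ in $c$. Expanding by the multinomial theorem gives
\[
  \rho(f\ast g_\lambda)=\sum_{|\alpha|=d} c^\alpha\, v_\alpha,
\]
where the fixed vectors $v_\alpha\in\mathbb C^M$ are pointwise products of the $\xi_{k_{S,j}}\ast g_\lambda$ (times multinomial constants). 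Hence $\{\rho(f\ast g_\lambda): f\in E_S\}$ lies in $\mathrm{span}_{\mathbb C}\{v_\alpha\}_{|\alpha|=d}$. There are $\binom{s-1+d}{d}$ multi-indices $\alpha\in\mathbb N_0^s$ with $|\alpha|=d$, which gives the first term of the minimum in \cref{eq:bound_struct_input_1node}.

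\emph{Fourier side.} I use the convolution theorem on $\mathbb Z/M\mathbb Z$: $\widehat{u\ast v}=\widehat u\,\widehat v$, and $\widehat{uv}$ equals $\widehat u\ast\widehat v$ up to a factor $1/M$.
\begin{itemize}
  \item Since $\widehat f=\sum_j c_j\widehat{\xi_{k_{S,j}}}$, we have $\mathrm{supp}(\widehat{f\ast g_\lambda})\subseteq H_{\lambda,S}$ for every $f\in E_S$.
  \item The pointwise power $(f\ast g_\lambda)^d$ has Fourier transform equal, up to a constant, to the $d$-fold cyclic convolution of $\widehat{f\ast g_\lambda}$ with itself. This is supported in the sumset $\langle H_{\lambda,S}\rangle_d=H_{d,\lambda,S}$.
\end{itemize}
So every $\rho(f\ast g_\lambda)$ lies in the subspace $\{u:\mathrm{supp}(\widehat u)\subseteq H_{d,\lambda,S}\}$. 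The DFT is a linear isomorphism, so this subspace has complex dimension $|H_{d,\lambda,S}|$. Taking the smaller of the two containing subspaces proves \cref{eq:bound_struct_input_1node}.

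\emph{Filtered version.} Convolving with $\chi$ is linear, so it maps the $\binom{s-1+d}{d}$-dimensional span onto a space of no larger dimension, giving the first term of \cref{eq:bound_1node_filtered}. On the Fourier side it multiplies by $\widehat\chi$, which cuts the support down to $H_{d,\lambda,S}\cap\mathrm{supp}(\widehat\chi)$ and gives the second term.

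The only step needing care is the support of the pointwise power. The identity $\widehat{uv}=\frac1M\widehat u\ast\widehat v$ shows by induction on $d$ that $\mathrm{supp}(\widehat{u^d})$ is contained in the $d$-fold sumset of $\mathrm{supp}(\widehat u)$. Cancellations can only shrink the support, which is harmless for an upper bound. Everything else is linear algebra.
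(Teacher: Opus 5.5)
Your proof is correct. It uses the same two facts as the paper: the DFT turns convolution into multiplication, and a pointwise $d$th power has Fourier support in the $d$-fold sumset. You organize them differently, though.

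The paper packs everything into one factorization $F_M(\rho(f\ast g_\lambda)) = A\,v_{M,d}(\widehat{f})$, where $v_{M,d}$ is the Veronese map on all of $\mathbb{C}^M$. It then
\begin{enumerate}
\item computes $\rank{A} = \lvert H_{d,\lambda,S}\rvert$ exactly, since each column of $A$ has at most one nonzero entry and so the nonzero rows are independent;
\item cites \cite[Example 1]{feinberg1981generalization} for the exact value $\dimC{\spanC{v_{M,d}(F_M(E_S))}} = \binom{s-1+d}{d}$;
\item concludes because a linear image of a set spans a space of dimension at most the minimum of the map's rank and the dimension of the set's span.
\end{enumerate}

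You instead exhibit two subspaces that each contain the image: the span of the $\binom{s-1+d}{d}$ coefficient vectors $v_\alpha$ from the multinomial expansion in $c\in\mathbb{C}^s$, and the space of signals with Fourier support in $H_{d,\lambda,S}$. You only ever need containments. This makes your route more elementary and self-contained. The first bound reduces to counting degree-$d$ monomials in $s$ variables, so no external result on Veronese images of subspaces is needed. The second needs no exact rank computation, since cancellations only help.

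What the paper's formulation buys is reusable infrastructure. The matrices $A_\lambda$ and the vector $v_{M,d}(\widehat{f})$ are exactly what \cref{L:LB-1.layer}, \cref{thm:lb-sparse}, and \cref{L:node-rect} build on. There, $\rho(f\ast g_\lambda)$ must be written as a filter-dependent linear map applied to a geometry-dependent vector.

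Incidentally, your expansion in the coefficients $c$ amounts to composing $A_\lambda$ with the factorization $v_{M,d}(\widehat{\Xi}_S c) = T_{\widehat{\Xi}_S}v_{s,d}(c)$. The paper only introduces that factorization later, in the proof of \cref{thm:lb-sparse}.
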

\begin{proof}
    As the DFT matrix $F_M$ is linear and invertible, we may equivalently consider the image of the map $f \mapsto F_M(\rho(f \ast g_\lambda))$ and compute, for $k \in \{0,\dots,M-1\}$,
    \begin{align*}
        \left(F_M\left(\rho(f \ast g_\lambda)\right)\right)_k  &= \frac{1}{M^{d-1}}\sum_{\substack{j_1,\dots,j_d \in \{0,\dots,M-1\} \\ j_1 +\cdots + j_d \equiv k \Mod{M}}} \widehat{f}_{j_1}(\widehat{g_\lambda})_{j_1} \cdots \widehat{f}_{j_d}(\widehat{g_\lambda})_{j_d}\\
        &=\frac{1}{M^{d-1}}\sum_{\substack{\alpha \in \mathbb N_0^M, \lvert \alpha \rvert =d \\ \sum_{t=0}^{M-1} t\alpha_t \equiv k \Mod{M}}} \binom{d}{\alpha} \widehat{g_\lambda}^{\alpha} \widehat{f}^\alpha.
    \end{align*}
    Thus, we can write
    \begin{align*}
        F_M(\rho(f\ast g_\lambda)) =A v_{M,d}(\widehat{f}),
    \end{align*}
    where $A$ is an $M \times \binom{M-1+d}{d}$ matrix with entries
    \begin{align*}
        A_{k\alpha} = \begin{cases}
            M^{1-d} \binom{d}{\alpha} \widehat{g_\lambda}^\alpha, &\text{if $\sum_{t=0}^{M-1}t \alpha_t \equiv k \Mod{M}$ and $\suppinline{\alpha} \subseteq H_{\lambda,S}$,} \\
            0, &\text{otherwise,}
        \end{cases}
    \end{align*}
    for $k \in \{0,\dots,M-1\}$ and $\alpha \in \mathbb N_0^M$ with $\lvert \alpha \rvert =d$ in degree lexicographic order. Here,
    \begin{align*}
        v_{M,d} \colon \mathbb C^M \to \mathbb C^{\binom{M-1+d}{d}}, z \mapsto \left(z^{\alpha}\right)_{\substack{\alpha \in \mathbb N_0^M \\ \lvert \alpha \rvert = d}} 
    \end{align*}
    denotes the vector Veronese map in degree lexicographic order. 
    Note that each column of $A$ has at most one nonzero entry (since for a fixed column $\alpha$, $(\sum_{t=0}^{M-1}t \alpha_t)\bmod{M}$ results in a unique value in $\{0,\ldots,M-1\}$), and consequently, the nonzero rows of $A$ are linearly independent. Thus, the rank of $A$ is given by the number of nonzero rows of $A$; that is,
    \begin{align}
        \rank{A} &= \left\lvert \left\{\left(\sum_{t=0}^{M-1} t \alpha_t\right) \bmod{M} \colon \alpha \in \mathbb N_0^M, \lvert \alpha \rvert =d, \suppinline{\alpha} \subseteq H_{\lambda,S} \right\}\right\rvert  
        = \left\lvert H_{d,\lambda,S}\right\rvert. \label{eq:rank_A_struct_inputs}
    \end{align}
    By \cite[Example 1]{feinberg1981generalization}, we have 
     \begin{align}\label{eq:dim_vd_FS_struct_inputs}
         \dimC{\spanC{v_{M,d}(F_M(E_S))}} = \binom{s-1+d}{d}.
     \end{align}
     From \cref{eq:rank_A_struct_inputs,eq:dim_vd_FS_struct_inputs} the desired upper bound in \cref{eq:bound_struct_input_1node} follows.
     
     To establish \cref{eq:bound_1node_filtered}, note that $F_M(\rho(f \ast g_\lambda) \ast \chi) = \diag{\widehat{\chi}}A v_{M,d}(\widehat{f})$. As $\rank{A}$ is given by the number of rows corresponding to $H_{d,\lambda,S}$, $\rank{\diag{\widehat{\chi}}A} = \lvert H_{d,\lambda,S} \cap \supp{\widehat{\chi}}\rvert$. This, together with \cref{eq:dim_vd_FS_struct_inputs}, proves \cref{eq:bound_1node_filtered}.  
\end{proof}

\begin{theorem} For the scattering network $\Phi$ of depth $n_{\mathrm{d}}$, we have 
\begin{align*}
    \sepcaps{2s}{\Phi} &\leq 4 \min_{\substack{S \subseteq \mathcal{K} \\ \lvert S \rvert =s}}
    \Biggl(\biggl\lvert \Bigl(\bigcup_{k \in S} \supp{\widehat{\xi_k}}\Bigr)
    \cap \supp{\widehat{\chi}} \biggr\rvert \\
    &\qquad + \sum_{\lambda \in \Lambda}
    \min\left\{\binom{s-1+d}{d},
    \left\lvert H_{d,\lambda,S} \cap \supp{\widehat{\chi}} \right\rvert \right\} \\
    &\qquad + \sum_{n=2}^{n_\mathrm{d}}
    \sum_{(\lambda_1,\ldots,\lambda_n) \in \Lambda^n}
    \left\lvert \left\langle \supp{\widehat{g_{\lambda_n}}} \right\rangle_d
    \cap \supp{\widehat{\chi}}\right\rvert\Biggr).
\end{align*} 
\end{theorem}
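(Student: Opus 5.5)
We start from \cref{prop:sc-complex}. It gives $\sepcaps{2s}{\Phi} = 2\min_S \dimC{W_S}$, where $W_S$ is the complex span of the vectors $(\varphi^\mathsf{T},\overline{\varphi}^\mathsf{T})^\mathsf{T}$ with $\varphi \in \Phi(E_S)$. The network is real-analytic, being a composition of convolutions and monomials, so the proposition applies. Each such vector lies in $X_S \times \overline{X_S}$, where $X_S \coloneqq \spanC{\Phi(E_S)}$. Hence $\dimC{W_S} \leq 2\dimC{X_S}$, and this accounts for the factor $4$. It therefore suffices to show that $\dimC{X_S}$ is at most the bracketed expression for each fixed $S$. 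Since $\Phi(f)$ is the concatenation of the outputs $\Phi^n(f)$ over $n=0,\dots,n_{\mathrm d}$ and over paths $q \in \Lambda^n$, the span $X_S$ sits inside the direct sum of the spans of the individual coordinate blocks. Thus $\dimC{X_S}$ is bounded by the sum over all nodes $q$ of $\dimC{\spanC{\{(U[q]f)\ast\chi \colon f \in E_S\}}}$. We then bound the three layer groups separately.

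\emph{Layer $0$.} The map $f \mapsto f\ast\chi$ is linear, and for $f \in E_S$ we have $\widehat{f} = \sum_{k\in S} c_k \widehat{\xi_k}$. So $\widehat{f\ast\chi} = \widehat f\,\widehat\chi$ is supported in $(\bigcup_{k\in S}\supp{\widehat{\xi_k}})\cap\supp{\widehat\chi}$. Since the DFT is invertible, the span has dimension at most the cardinality of this set, which gives the first term.

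\emph{Layer $1$.} For $q=\lambda$ the node output is $\rho(f\ast g_\lambda)\ast\chi$. Here \cref{eq:bound_1node_filtered} of \cref{L:UP_one_node} gives exactly the bound $\min\{\binom{s-1+d}{d}, |H_{d,\lambda,S}\cap\supp{\widehat\chi}|\}$.

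\emph{Layers $n\ge 2$.} For $q=(\lambda_1,\dots,\lambda_n)$, write $h \coloneqq U[(\lambda_1,\dots,\lambda_{n-1})]f$, so that the node output is $\rho(h\ast g_{\lambda_n})\ast\chi$. The vector $h\ast g_{\lambda_n}$ has Fourier support in $\supp{\widehat{g_{\lambda_n}}}$ regardless of $h$. The convolution identity from the proof of \cref{L:UP_one_node} shows that the $d$-th power of a signal with Fourier support in $G$ has Fourier support in the sumset $\langle G\rangle_d$. Hence every output lies in the space of vectors whose DFT is supported in $\langle\supp{\widehat{g_{\lambda_n}}}\rangle_d\cap\supp{\widehat\chi}$. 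This is a fixed subspace of that dimension, which gives the third term. Summing the three groups of terms over all nodes and taking the minimum over $S$ yields the claim.

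The main obstacle is justifying the reduction at the start. First, \cref{prop:sc-complex} requires $\Phi$ real-analytic as a map of $\mathbb R^{2M}$, which holds because each coordinate is a polynomial in $\Re f$ and $\Im f$. Second, we need the bound $\dimC{W_S}\le 2\dimC{X_S}$, which holds because $W_S \subseteq X_S\oplus\overline{X_S}$ and $\dim\overline{X_S}=\dim X_S$. The support computation for $n\ge2$ is routine once the sumset property of monomials is taken from the proof of \cref{L:UP_one_node}. For that step the $\binom{s-1+d}{d}$ bound is simply not used, since the input $h$ is no longer confined to $E_S$.
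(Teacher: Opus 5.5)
Your proposal is correct and follows essentially the same route as the paper. Both bound each node's span dimension (root via Fourier support, first layer via \cref{L:UP_one_node}, deeper layers via the sumset argument with inputs only known to lie in $\mathbb C^M$), sum over the tree, and invoke \cref{prop:sc-complex}; you additionally make explicit the real-analyticity check and the inclusion $W_S \subseteq X_S \oplus \overline{X_S}$ that produces the factor $4$, which the paper leaves implicit.
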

\begin{proof}
\sloppy The root node outputs $f\ast\chi$, $f \in E_S$, and thus contributes at most $\lvert (\bigcup_{k \in S} \suppinline{\widehat{\xi_k}}) \cap \suppinline{\widehat{\chi}}\rvert$. 
For the nodes of the first layer, one can directly apply \cref{L:UP_one_node}. 
For a node of depth $n \geq 2$, the input $U[\lambda_1,\ldots,\lambda_{n-1}]f$ need not lie in $E_S$, but trivially lies in $\mathbb C^M$. 
Particularizing the proof of \cref{L:UP_one_node} to the case where the input space is $\mathbb C^M$ yields
\begin{align*}
    \dimC{\spanC{\left\{\rho(u \ast g_{\lambda_n})\ast \chi \colon u \in \mathbb C^M\right\}}} \leq \left\lvert \langle \supp{\widehat{g_{\lambda_n}}} \rangle_d \cap \supp{\widehat{\chi}}\right\rvert.
\end{align*}
Summing these estimates over all nodes of the scattering tree and applying \cref{prop:sc-complex} establishes the desired bound. 
\end{proof}
\sloppy To maximize this upper bound, the filters $\{g_\lambda\}_{\lambda \in \Lambda}$ should be chosen such that $H_{d,\lambda,S} = \mathbb{Z}/M\mathbb{Z}$. For $d$ sufficiently large, this is the case if and only if $H_{\lambda,S}$ is not contained in a coset of a proper subgroup of $\mathbb{Z}/M\mathbb{Z}$. Indeed, for $h_0 \in H_{\lambda,S}$, let $B \coloneqq -h_0+ H_{\lambda,S}$. Then, by commutativity, $H_{d,\lambda,S} = dh_0+\langle B \rangle_d$. As the identity element of $\mathbb{Z}/M\mathbb{Z}$ lies in $B$, we have $\langle B \rangle_\ell \subseteq \langle B \rangle_{\ell+1}$, for all $\ell \in \mathbb{N}$. Thus, there exists a $d_0 \in \mathbb{N}$ with $\langle B \rangle_\ell = \langle B \rangle_{d_0}$, for all $\ell \geq d_0$. Now, $\langle B \rangle_{d_0}+ \langle B \rangle_{d_0} = \langle B \rangle_{2d_0} = \langle B \rangle_{d_0}$, which shows that $\langle B \rangle_{d_0}$ is closed under the group operation and thus, being a nonempty finite subset of $\mathbb{Z}/M\mathbb{Z}$, a subgroup \cite[Theorem 3.3]{gallian2010contemporary}. As $\langle B \rangle_{d_0}$ contains $B$ and is contained in the subgroup $\langle B \rangle$ generated by $B$, it equals $\langle B \rangle$. Consequently, $H_{d,\lambda,S} = dh_0+ \langle B \rangle$, whenever $d \geq d_0$.
It remains to verify that $\langle B \rangle = \mathbb{Z}/M\mathbb{Z}$ if and only if $H_{\lambda,S}$ is not contained in a coset of a proper subgroup of $\mathbb{Z}/M\mathbb{Z}$. If $\langle B \rangle$ is proper, then $H_{\lambda,S} = h_0+ B \subseteq h_0 +\langle B \rangle$. Conversely, if $H_{\lambda,S} \subseteq a+K$ for some $a \in \mathbb{Z}/M\mathbb{Z}$ and some proper subgroup $K$, then $h_0 \in a+K$ implies $a+K = h_0+ K$, so that $B = -h_0+H_{\lambda,S} \subseteq K$ and $\langle B \rangle \subseteq K$. 

\paragraph{Lower bound on $\sepcaps{2s}{\Phi}$} For the lower bound, we follow the approach presented in \cref{cor:lower-bound-real-analytic}. 
To this end, for $S\subseteq \mathcal{K}$ with $\lvert S \rvert =s$, let $\{K_{S,j}\}_{j \in \mathcal{J}_{S}} \subseteq \mathbb R ^{2s}$ be a countable sequence of compact sets such that $\{\psi_{S,j}\colon K_{S,j}\to E_{S,j}\}_{j \in \mathcal{J}_S, S\subseteq \mathcal{K}, \lvert S \rvert =s}$ constitutes a bi-Lipschitz parametrization for $E$ in the sense of \cref{L:bi-Lip}, with $E_{S,j} = \psi_{S,j}(K_{S,j})$. 
By carrying out the same computation as in \cref{eq:inner-prod-real-complex}, one can deduce that 
\begin{align*}
    \left\lVert \widetilde{\Phi}\left(\begin{pmatrix}
        \Re(f) \\ \Im(f)
    \end{pmatrix}\right)\right\rVert &= \left\lVert {\Phi}(f)\right\rVert, \quad f \in E, \\
    \innerprod{\widetilde{\Phi}\left(\begin{pmatrix}
        \Re(f_1) \\ \Im(f_1)
    \end{pmatrix}\right)}{\widetilde{\Phi}\left(\begin{pmatrix}
        \Re(f_2) \\ \Im(f_2)
    \end{pmatrix}\right)} &= \Re(\innerprod{\Phi(f_1)}{\Phi(f_2)}), \quad f_1,f_2 \in E.
\end{align*}
Consequently, for complex-valued $\Phi$ on $s$-sparse signals in $\mathbb C^M$, \cref{cor:lower-bound-real-analytic} takes the form
\begin{align*}
    \sepcaps{2s}{\Phi} &\geq \inf_{\substack{S \subseteq \mathcal{K} \\ \lvert S \rvert =s}} \inf_{j \in \mathcal{J}_S} \frac{2 \left(\int_{E_{S,j}} \left\lVert \Phi(f)\right\rVert^2\,\mathrm{d}\mathcal{H}^{2s}(f)\right)^2}{\int_{E_{S,j}}\int_{E_{S,j}} \left\lvert \Re(\innerprod{\Phi(f_1)}{\Phi(f_2)})\right\rvert^2 \,\mathrm{d}\mathcal{H}^{2s}(f_1)\mathrm{d}\mathcal{H}^{2s}(f_2)} \\
    &\geq \inf_{\substack{S \subseteq \mathcal{K} \\ \lvert S \rvert =s}} \inf_{j \in \mathcal{J}_S} \frac{2 \left(\int_{E_{S,j}} \left\lVert \Phi(f)\right\rVert^2\,\mathrm{d}\mathcal{H}^{2s}(f)\right)^2}{\int_{E_{S,j}}\int_{E_{S,j}} \left\lvert \innerprod{\Phi(f_1)}{\Phi(f_2)}\right\rvert^2 \,\mathrm{d}\mathcal{H}^{2s}(f_1)\mathrm{d}\mathcal{H}^{2s}(f_2)}. 
\end{align*}

\sloppy In what follows, we first analyze the RHS for the unfiltered first-layer map $\Phi_1(f)=(\rho(f \ast g_\lambda))_{\lambda \in \Lambda}$ and subsequently extend
the analysis to the first-layer feature map
$\Phi^1(f) = (\rho(f \ast g_\lambda)\ast \chi)_{\lambda \in \Lambda}$, $f \in E$.
To state our result, let us introduce the matrices $A_{\Lambda} \in \mathbb C^{\binom{M-1+d}{d} \times \binom{M-1+d}{d}}$ and $G_{S,j}\in \mathbb C^{\binom{M-1+d}{d} \times \binom{M-1+d}{d}}$ which are characterized by the filters $\{g_{\lambda}\}_{\lambda \in \Lambda}$ and the geometry of $E_{S,j}$, respectively. Specifically, 
\begin{align*}
    G_{S,j} \coloneqq \int_{E_{S,j}} v_{M,d}(\widehat{f})v_{M,d}(\widehat{f})^{\mathsf{H}}\,\mathrm{d}\mathcal{H}^{2s}(f),
\end{align*}
and $A_{\Lambda} \coloneqq \frac{1}{M}\sum_{\lambda \in \Lambda} A_{\lambda}^{\mathsf{H}}A_{\lambda}$, where $A_\lambda \in \mathbb C^{M \times \binom{M-1+d}{d}}$ with entries
\begin{align*}
    (A_{\lambda})_{k \alpha} = \begin{cases}
            M^{1-d}\binom{d}{\alpha} \widehat{g_\lambda}^\alpha, &\text{if $\sum_{t=0}^{M-1}t \alpha_t \equiv k \Mod{M}$}, \\
            0, &\text{otherwise},
    \end{cases}
\end{align*}
for all $k\in\{0,\ldots,M-1\}$ and $\alpha \in \mathbb N_0^M$ with $\lvert \alpha \rvert =d$. For any such $\alpha$, let $\gamma_{S,j,\alpha}^{\mathsf{H}}$ denote the corresponding row of $G_{S,j}$ and $a_{\Lambda,\alpha}$ denote the corresponding column of $A_{\Lambda}$. 

\begin{lemma}\label{L:LB-1.layer}
    For $\Phi_1(f)=(\rho(f \ast g_\lambda))_{\lambda \in \Lambda}$, $f \in E$, we have
    \begin{align*}
        \sepcaps{2s}{\Phi_1} \geq \inf_{\substack{S \subseteq \mathcal{K} \\ \lvert S \rvert =s}} \inf_{j \in \mathcal{J}_S} \frac{2\left(\trace{G_{S,j}A_\Lambda }\right)^2}{\trace{(G_{S,j}A_{\Lambda})^2}}=\inf_{\substack{S \subseteq \mathcal{K} \\ \lvert S \rvert =s}} \inf_{j \in \mathcal{J}_S} \frac{2\left(\sum_{\alpha} \innerprod{\gamma_{S,j,\alpha}}{a_{\Lambda,\alpha}}\right)^2}{\sum_{\alpha,\beta} \innerprod{\gamma_{S,j,\alpha}}{a_{\Lambda,\beta}} \innerprod{\gamma_{S,j,\beta}}{a_{\Lambda,\alpha}}}. 
    \end{align*}
\end{lemma}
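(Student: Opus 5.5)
We start from the complex form of \cref{cor:lower-bound-real-analytic} derived just above, applied to $\Phi_1$. This gives $\sepcaps{2s}{\Phi_1}$ bounded below by the infimum over $S$ and $j$ of
\[
2\Bigl(\int_{E_{S,j}}\|\Phi_1(f)\|^2\,\mathrm{d}\mathcal{H}^{2s}\Bigr)^2 \Big/ \int\!\!\int |\langle\Phi_1(f_1),\Phi_1(f_2)\rangle|^2 .
\]
The hypotheses hold because $\Phi_1\circ\psi_{S,j}$ is polynomial, hence real-analytic, on all of $\mathbb R^{2s}$: on each $E_S$ the bi-Lipschitz charts may be taken as restrictions of linear maps. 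The task is then to express numerator and denominator through $G_{S,j}$ and $A_\Lambda$.

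\textbf{Step 1 (Fourier representation).} As in the proof of \cref{L:UP_one_node}, $F_M(\rho(f\ast g_\lambda)) = A_\lambda v_{M,d}(\widehat f)$. Here we use the full matrix $A_\lambda$ without the support restriction, which is harmless because $\widehat g_\lambda^\alpha$ already encodes the filter support. By Parseval, $\langle u,w\rangle = \frac1M\langle F_Mu,F_Mw\rangle$ for the unnormalized DFT. With $v_i \coloneqq v_{M,d}(\widehat{f_i})$, summing over $\lambda$ gives
\[
\langle\Phi_1(f_1),\Phi_1(f_2)\rangle=\tfrac1M\sum_\lambda \langle A_\lambda v_1,A_\lambda v_2\rangle = v_2^{\mathsf H}A_\Lambda v_1,
\]
up to the conjugation convention of the inner product, which only affects a complex conjugate and is irrelevant under $|\cdot|^2$. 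In particular $\|\Phi_1(f)\|^2 = v^{\mathsf H}A_\Lambda v$.

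\textbf{Step 2 (traces).} Write $v^{\mathsf H}A_\Lambda v=\trace{A_\Lambda vv^{\mathsf H}}$ and integrate over $E_{S,j}$ using linearity of trace and integral. This gives $\int\|\Phi_1\|^2=\trace{A_\Lambda G_{S,j}}=\trace{G_{S,j}A_\Lambda}$. For the denominator,
\[
|v_2^{\mathsf H}A_\Lambda v_1|^2 = v_2^{\mathsf H}A_\Lambda v_1 v_1^{\mathsf H}A_\Lambda v_2 = \trace{A_\Lambda v_1v_1^{\mathsf H}A_\Lambda v_2v_2^{\mathsf H}},
\]
using $A_\Lambda^{\mathsf H}=A_\Lambda$. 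Integrating in $f_1$ and $f_2$ separately (Fubini, justified since everything is continuous on compact sets) gives $\trace{A_\Lambda G_{S,j}A_\Lambda G_{S,j}}=\trace{(G_{S,j}A_\Lambda)^2}$ by cyclicity. This proves the first expression.

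\textbf{Step 3 (entrywise form).} Since $G_{S,j}$ is Hermitian, the $\alpha$-th row of $G_{S,j}$ is $\gamma_{S,j,\alpha}^{\mathsf H}$. Hence $(G_{S,j}A_\Lambda)_{\alpha\beta}=\gamma_{S,j,\alpha}^{\mathsf H}a_{\Lambda,\beta}=\langle \gamma_{S,j,\alpha},a_{\Lambda,\beta}\rangle$ in the convention $\langle x,y\rangle = x^{\mathsf H}y$ (the paper's convention must be matched here). Then $\trace{GA}=\sum_\alpha\langle\gamma_\alpha,a_\alpha\rangle$ and $\trace{(GA)^2}=\sum_{\alpha,\beta}(GA)_{\alpha\beta}(GA)_{\beta\alpha}$, which gives the second expression.

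\textbf{Main obstacle.} The main difficulty is bookkeeping rather than any single step. First, the factor $1/M$ in $A_\Lambda$ must match the unnormalized DFT in Parseval. Second, the measure $\mathcal H^{2s}$ on $E_{S,j}\subseteq\mathbb C^M\simeq\mathbb R^{2M}$ must be the one appearing in $G_{S,j}$; it does, since the definition of $G_{S,j}$ uses the same measure. Third, the conjugation and inner-product conventions must be handled so that the denominator is genuinely $\trace{(GA)^2}$; this is real and nonnegative, being $\|G^{1/2}AG^{1/2}\|_F^2$.
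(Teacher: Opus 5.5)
Your proposal is correct and follows the paper's proof essentially step for step. You use Parseval for the unnormalized DFT together with $F_M(\rho(f\ast g_\lambda)) = A_\lambda v_{M,d}(\widehat f)$ to get $\langle \Phi_1(f_1),\Phi_1(f_2)\rangle = v_{M,d}(\widehat{f_2})^{\mathsf H}A_\Lambda v_{M,d}(\widehat{f_1})$. You then rewrite the numerator and denominator of the complex form of \cref{cor:lower-bound-real-analytic} as $\trace{G_{S,j}A_\Lambda}$ and $\trace{(G_{S,j}A_\Lambda)^2}$ via the trace trick and expand them entrywise, as the paper does. Your extra remarks are sound; the paper leaves both implicit. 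One is that linear charts on $E_S$ secure the real-analyticity hypothesis. The other is that both traces are real, so the inner-product convention does not matter.
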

\begin{proof}
    The DFT matrix $F_M \in \mathbb C^{M \times M}$ has inverse $\frac 1M F_M^{\mathsf{H}}$, so that for $f_1,f_2 \in E$, 
    \begin{align*}
        \innerprod{\Phi_1(f_1)}{\Phi_1(f_2)} = \sum_{\lambda \in \Lambda} \innerprod{\rho(f_1\ast g_\lambda)}{\rho(f_2\ast g_\lambda)} = \frac{1}{M}\sum_{\lambda \in \Lambda} \innerprod{F_M(\rho(f_1\ast g_\lambda))}{F_M(\rho(f_2\ast g_\lambda))}. 
    \end{align*}
    As derived in the proof of \cref{L:UP_one_node}, we can write\footnote{In comparison to the matrix $A$ defined in the proof of \cref{L:UP_one_node}, we drop the condition $\suppinline{\alpha} \subseteq H_{\lambda,S}$ here, so that $A_\lambda$ depends only on $g_\lambda$.} $F_M(\rho(f \ast g_{\lambda})) = A_{\lambda}v_{M,d}(\widehat{f})$, $f \in E$.
    Consequently,
    \begin{align*}
        \innerprod{\Phi_1(f_1)}{\Phi_1(f_2)} &= \frac{1}{M}\sum_{\lambda\in \Lambda} \innerprod{A_{\lambda}v_{M,d}(\widehat{f_1})}{A_{\lambda}v_{M,d}(\widehat{f_2})} \\
        &= \innerprod{\frac{1}{M}\sum_{\lambda\in \Lambda}A_{\lambda}^{\mathsf{H}}A_{\lambda}v_{M,d}(\widehat{f_1})}{v_{M,d}(\widehat{f_2})} \\
        &= \innerprod{A_{\Lambda}v_{M,d}(\widehat{f_1})}{v_{M,d}(\widehat{f_2})}. 
    \end{align*}
    This leads to
    \allowdisplaybreaks
    \begin{align}
        &\int_{E_{S,j}}\int_{E_{S,j}} \left\lvert \innerprod{\Phi_1(f_1)}{\Phi_1(f_2)}\right\rvert^2 \,\mathrm{d}\mathcal{H}^{2s}(f_1)\mathrm{d}\mathcal{H}^{2s}(f_2) \nonumber\\
        &\quad= \int_{E_{S,j}}\int_{E_{S,j}} \left\lvert \innerprod{A_{\Lambda}v_{M,d}(\widehat{f_1})}{v_{M,d}(\widehat{f_2})}\right\rvert^2 \,\mathrm{d}\mathcal{H}^{2s}(f_1)\mathrm{d}\mathcal{H}^{2s}(f_2) \nonumber\\
        &\quad= \int_{E_{S,j}}\int_{E_{S,j}} v_{M,d}(\widehat{f_2})^{\mathsf{H}} A_{\Lambda} v_{M,d}(\widehat{f_1}) v_{M,d}(\widehat{f_1})^{\mathsf{H}} A_{\Lambda} v_{M,d}(\widehat{f_2}) \,\mathrm{d}\mathcal{H}^{2s}(f_1)\mathrm{d}\mathcal{H}^{2s}(f_2) \nonumber\\
        &\quad= \int_{E_{S,j}}\int_{E_{S,j}} \trace{A_{\Lambda} v_{M,d}(\widehat{f_1}) v_{M,d}(\widehat{f_1})^{\mathsf{H}} A_{\Lambda} v_{M,d}(\widehat{f_2}) v_{M,d}(\widehat{f_2})^{\mathsf{H}}} \,\mathrm{d}\mathcal{H}^{2s}(f_1)\mathrm{d}\mathcal{H}^{2s}(f_2) \nonumber\\
        &\quad = \trace{A_{\Lambda} \int_{E_{S,j}} v_{M,d}(\widehat{f_1}) v_{M,d}(\widehat{f_1})^{\mathsf{H}} \,\mathrm{d}\mathcal{H}^{2s}(f_1) A_{\Lambda} \int_{E_{S,j}} v_{M,d}(\widehat{f_2}) v_{M,d}(\widehat{f_2})^{\mathsf{H}} \,\mathrm{d}\mathcal{H}^{2s}(f_2)}\nonumber\\
        &\quad = \trace{A_{\Lambda} G_{S,j} A_{\Lambda} G_{S,j}} \nonumber\\
        &\quad = \trace{(G_{S,j}A_{\Lambda})^2} \nonumber\\
        &\quad= \sum_{\alpha,\beta} (G_{S,j}A_{\Lambda})_{\alpha,\beta} (G_{S,j}A_{\Lambda})_{\beta,\alpha} \nonumber\\
        &\quad= \sum_{\alpha,\beta} \innerprod{\gamma_{S,j,\alpha}}{a_{\Lambda,\beta}} \innerprod{\gamma_{S,j,\beta}}{a_{\Lambda,\alpha}}.
        \label{eq:LB-1layer-denominator}
    \end{align}
    Likewise, we have
    \allowdisplaybreaks
    \begin{align}
        \int_{E_{S,j}} \left\lVert \Phi_1(f)\right\rVert^2\,\mathrm{d}\mathcal{H}^{2s}(f) &= \int_{E_{S,j}} \innerprod{A_{\Lambda}v_{M,d}(\widehat{f})}{v_{M,d}(\widehat{f})}\,\mathrm{d}\mathcal{H}^{2s}(f) \nonumber\\
        &= \int_{E_{S,j}} \trace{A_{\Lambda} v_{M,d}(\widehat{f}) v_{M,d}(\widehat{f})^{\mathsf{H}}} \,\mathrm{d}\mathcal{H}^{2s}(f) \nonumber\\
        &= \trace{A_{\Lambda} \int_{E_{S,j}} v_{M,d}(\widehat{f}) v_{M,d}(\widehat{f})^{\mathsf{H}} \,\mathrm{d}\mathcal{H}^{2s}(f)}\nonumber\\
        &= \trace{A_{\Lambda} G_{S,j}} \nonumber\\
        &= \sum_{\alpha} (G_{S,j}A_{\Lambda})_{\alpha,\alpha} \nonumber\\
        &= \sum_{\alpha} \innerprod{\gamma_{S,j,\alpha}}{a_{\Lambda,\alpha}}. \label{eq:LB-1layer-numerator}
    \end{align}
    Combining \cref{eq:LB-1layer-denominator,eq:LB-1layer-numerator} gives the desired bound.
\end{proof}

The trace ratio is maximized when, for all $j \in \mathcal{J}_S$ and all $S \subseteq \mathcal{K}$ with $\lvert S \rvert = s$, the systems $\{\gamma_{S,j,\alpha}\}_{\alpha}$ and $\{a_{\Lambda,\alpha}\}_{\alpha}$ are biorthogonal, i.e.,
$\innerprod{\gamma_{S,j,\alpha}}{a_{\Lambda,\beta}} = \mathbbm{1}_{\{\alpha=\beta\}}$, for all $\alpha,\beta$. Exact biorthogonality would make the lower bound equal to $2\binom{M-1+d}{d}$, but the structure of $A_\Lambda$ generally prevents this. We therefore seek approximate biorthogonality, quantified through the restricted isometry property in \cref{app:cs}.   
Another drawback of directly inferring design choices for $\{g_\lambda\}_{\lambda \in \Lambda}$ from \cref{L:LB-1.layer} is that it is not readily apparent how $G_{S,j}$ depends on $\Xi$. We, however, aim to establish a design criterion for $\{g_\lambda\}_{\lambda \in \Lambda}$ in terms of $\Xi$. To this end, consider the following setup. 

\sloppy Assume from now on that $\lvert \mathcal{K} \rvert < \infty$. With the labeling $\mathcal{K} = \{k_1, \ldots, k_{|\mathcal{K}|}\}$, define the matrix $\Xi \in \mathbb{C}^{M \times |\mathcal{K}|}$ with columns $(\xi_{k_1}, \ldots, \xi_{k_{|\mathcal{K}|}})$, and set $\widehat \Xi = F_M \Xi$. Let $\iota_S \in \{0,1\}^{|\mathcal{K}| \times s}$ be the matrix with $(\iota_S)_{\ell m} \coloneqq \mathbbm{1}_{\{k_\ell = k_{S,m}\}}$ for $\ell \in \{1,\ldots,|\mathcal{K}|\}$ and $m \in \{1,\ldots,s\}$, so that $\Xi_S \coloneqq \Xi\iota_S \in \mathbb{C}^{M \times s}$ is the submatrix of $\Xi$ with columns $(\xi_{k_{S,1}}, \ldots, \xi_{k_{S,s}})$. Likewise, we set $\widehat{\Xi}_S \coloneqq F_M \Xi_S$.
The bi-Lipschitz parametrization $\{\psi_{S,j}\colon K_{S,j} \to E_{S,j}\}_{j \in \mathcal{J}_S, S\subseteq \mathcal{K}, \lvert S\rvert=s}$ can be chosen such that $\psi_{S,j}(x) = \Xi_S (x'+ix'')$, where $x = ((x')^{\mathsf{T}}, (x'')^{\mathsf{T}})^{\mathsf{T}} \in K_{S,j}$. 
To isolate the dependence of $G_{S,j}$ on $\Xi$, we leverage the following property of the vector Veronese map $v_{M,d}$. Let $T_{\widehat{\Xi}} \in \mathbb{C}^{\binom{M-1+d}{d} \times \binom{\lvert \mathcal{K} \rvert-1+d}{d}}$ be the unique matrix satisfying
\[
v_{M,d}(\widehat{\Xi} z) = T_{\widehat{\Xi}}\, v_{\lvert \mathcal{K}\rvert,d}(z), \qquad z \in \mathbb{C}^{\lvert \mathcal{K} \rvert}.
\]
Concretely, for multi-indices $\alpha \in \mathbb{N}_0^M$ and $\nu \in \mathbb{N}_0^{\lvert\mathcal{K}\rvert}$ with $|\alpha|=|\nu|=d$, the entry $(T_{\widehat\Xi})_{\alpha\nu}$ is the coefficient of $z^\nu$ in the expansion of $(\widehat\Xi z)^\alpha$.  
Furthermore, we introduce
\begin{align*}
    C_{S,j} \coloneqq \int_{K_{S,j}} v_{s,d}(x'+ix'')v_{s,d}(x'+ix'')^{\mathsf{H}}\,\mathrm{d}x. 
\end{align*}
The matrix $C_{S,j}$ depends on the parametrization domain but not on the frame $\Xi$. 
Denote by $\delta_{s,d}(\Xi;\Lambda)$ the $\binom{s-1+d}{d}$th restricted isometry constant of $(T_{\widehat{\Xi}}^\mathsf{H}A_{\Lambda}T_{\widehat{\Xi}})^{1/2}$, see \cref{def:RIC}. 
\begin{theorem}\label{thm:lb-sparse}
Let $E\subseteq \mathbb C^M$ be the set of $s$-sparse signals, and assume that $\delta_{s,d}(\Xi;\Lambda) <1$. For $\Phi_1(f) = (\rho(f \ast g_\lambda))_{\lambda \in \Lambda}$, $f\in E$, it holds that
\begin{align}\label{eq:lb-sparse}
    \sepcaps{2s}{\Phi_1} \geq \left(\frac{1-\delta_{s,d}(\Xi;\Lambda)}{1+\delta_{s,d}(\Xi;\Lambda)}\right)^2 \min_{\substack{S \subseteq \mathcal{K} \\ \lvert S \rvert =s}}\inf_{j \in \mathcal{J}_S} \frac{2\left(\trace{C_{S,j}}\right)^2}{\trace{\left(C_{S,j}\right)^2}}.
\end{align}
\end{theorem}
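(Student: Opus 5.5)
The plan is to start from \cref{L:LB-1.layer} and rewrite $G_{S,j}$ in terms of $C_{S,j}$ and the frame $\Xi$, so that the trace ratio there becomes a ratio involving $C_{S,j}$ and a principal submatrix of $B \coloneqq T_{\widehat\Xi}^{\mathsf H} A_\Lambda T_{\widehat\Xi}$. \Cref{L:LB-1.layer} applies since $\Phi_1$ is polynomial, hence real-analytic on each parametrization domain.

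\textbf{Step 1: express $G_{S,j}$ through $C_{S,j}$.} I would use the parametrization $\psi_{S,j}(x) = \Xi_S(x'+ix'')$. Writing $c = x'+ix''$, we have $\widehat{f} = \widehat\Xi\,\iota_S c$, and therefore
\begin{align*}
v_{M,d}(\widehat f) = T_{\widehat\Xi}\, v_{\lvert\mathcal K\rvert,d}(\iota_S c) = T_{\widehat\Xi} P_S\, v_{s,d}(c).
\end{align*}
Here $P_S \in \{0,1\}^{\binom{\lvert\mathcal K\rvert-1+d}{d}\times\binom{s-1+d}{d}}$ is the selection matrix sending each degree-$d$ monomial in the $s$ variables indexed by $S$ to the same monomial among all $\lvert\mathcal K\rvert$ variables. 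Its columns are distinct standard basis vectors, so $P_S^{\mathsf H}P_S = I$.

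Since $\widetilde\sigma_S$ is linear and injective by \cref{eq:dimR-2s}, the area formula gives a constant Jacobian $J_S>0$ on $K_{S,j}$. Hence
\begin{align*}
G_{S,j} = J_S\, T_{\widehat\Xi}P_S\, C_{S,j}\, P_S^{\mathsf H} T_{\widehat\Xi}^{\mathsf H}.
\end{align*}

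\textbf{Step 2: reduce the trace ratio.} Set $B_S \coloneqq P_S^{\mathsf H} B P_S$, the principal submatrix of $B$ on the monomials supported in $S$. By cyclicity of the trace,
\begin{align*}
\trace{G_{S,j}A_\Lambda} = J_S\,\trace{C_{S,j}B_S}, \qquad \trace{(G_{S,j}A_\Lambda)^2} = J_S^2\,\trace{(C_{S,j}B_S)^2}.
\end{align*}
The constant $J_S$ therefore cancels in the ratio of \cref{L:LB-1.layer}.

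\textbf{Step 3: apply the restricted isometry property.} The number of monomials supported in $S$ is exactly $\binom{s-1+d}{d}$. By the definition of $\delta_{s,d}(\Xi;\Lambda)$ (\cref{def:RIC}), applied to $B^{1/2}$ and vectors $P_S u$, we get
\begin{align*}
(1-\delta)\lVert u\rVert^2 \leq u^{\mathsf H}B_S u \leq (1+\delta)\lVert u\rVert^2,
\end{align*}
so the spectrum of $B_S$ lies in $[1-\delta,1+\delta]$, with $\delta<1$. If \cref{def:RIC} is normalized with non-squared norms, the constants change only by harmless squares; I would check this normalization against the appendix.

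Since $C_{S,j}\succeq 0$, we have $\trace{C_{S,j}B_S} = \trace{C_{S,j}^{1/2}B_SC_{S,j}^{1/2}} \geq (1-\delta)\trace{C_{S,j}}$. For the denominator,
\begin{align*}
\trace{(C_{S,j}B_S)^2} = \lVert B_S^{1/2}C_{S,j}B_S^{1/2}\rVert_F^2 \leq \lVert B_S\rVert^2 \lVert C_{S,j}\rVert_F^2 \leq (1+\delta)^2\trace{C_{S,j}^2}.
\end{align*}
Combining the two bounds gives the ratio bound $\bigl(\tfrac{1-\delta}{1+\delta}\bigr)^2\cdot 2\bigl(\trace{C_{S,j}}\bigr)^2/\trace{C_{S,j}^2}$. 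Taking the infimum over $j$, and the minimum over the finitely many $S$ (since $\lvert\mathcal K\rvert<\infty$), yields \cref{eq:lb-sparse}.

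The main obstacle I expect is Step 1: verifying the factorization $v_{M,d}(\widehat\Xi\iota_S c) = T_{\widehat\Xi}P_S v_{s,d}(c)$ with the correct index bookkeeping, and confirming that the relevant support size for the RIC is exactly $\binom{s-1+d}{d}$. Once that is in place, Steps 2 and 3 are routine linear algebra.
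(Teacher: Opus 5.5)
Your proposal is correct and follows essentially the same route as the paper: the area-formula factorization $G_{S,j}=\mathbf{J}(\Xi_S)\,T_{\widehat{\Xi}}P_S C_{S,j}P_S^{\mathsf{H}}T_{\widehat{\Xi}}^{\mathsf{H}}$ (your $P_S$ is the paper's $\mathcal{I}_S$), then cyclicity of the trace to reduce to the principal submatrix $P_S^{\mathsf{H}}T_{\widehat{\Xi}}^{\mathsf{H}}A_\Lambda T_{\widehat{\Xi}}P_S$, then restricted-isometry bounds on its spectrum. The differences are cosmetic: you read off the spectral bounds directly from the RIC definition (which does use squared norms, as you assumed) and bound the denominator via $\lVert B_S^{1/2}C_{S,j}B_S^{1/2}\rVert_F^2\leq\lVert B_S\rVert^2\lVert C_{S,j}\rVert_F^2$ instead of the paper's two-step trace inequality, and since $1-\delta>0$ this also makes the paper's separate check that $\lambda_{\max}>0$ automatic.
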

\begin{proof}
By the area formula, we have
\begin{align*}
    G_{S,j} &= \int_{K_{S,j}} v_{M,d}(\widehat \Xi_S (x'+ix''))v_{M,d}(\widehat \Xi_S (x'+ix''))^{\mathsf{H}}\mathbf{J}(\Xi_S)\,\mathrm{d}x \\
    &= \mathbf{J}(\Xi_S) T_{\widehat{\Xi}_S} \underbrace{\int_{K_{S,j}} v_{s,d}(x'+ix'')v_{s,d}(x'+ix'')^{\mathsf{H}}\,\mathrm{d}x}_{= C_{S,j}}\; T_{\widehat{\Xi}_S}^\mathsf{H},  
\end{align*}
where $T_{\widehat{\Xi}_S} \in \mathbb{C}^{\binom{M-1+d}{d} \times \binom{s-1+d}{d}}$ is the matrix defined by the relation
\begin{align*}
    v_{M,d}(\widehat{\Xi}_S z) = T_{\widehat{\Xi}_S}\, v_{s,d}(z), \quad \text{for } z \in \mathbb{C}^{s}.
\end{align*}
Using the cyclic property of the trace, we then obtain
\begin{align}
    \frac{\left(\trace{G_{S,j}A_{\Lambda}}\right)^2}{\trace{(G_{S,j}A_{\Lambda} )^2}} &= \frac{\left(\trace{C_{S,j} T_{\widehat{\Xi}_S}^\mathsf{H}A_{\Lambda}T_{\widehat{\Xi}_S}}\right)^2}{\trace{\left(C_{S,j}T_{\widehat{\Xi}_S}^\mathsf{H}A_{\Lambda}T_{\widehat{\Xi}_S}\right)^2}} \nonumber\\
    & \geq \left(\frac{\lambda_{\min}(T_{\widehat{\Xi}_S}^\mathsf{H}A_{\Lambda}T_{\widehat{\Xi}_S})}{\lambda_{\max}(T_{\widehat{\Xi}_S}^\mathsf{H}A_{\Lambda}T_{\widehat{\Xi}_S})}\right)^2 \frac{\left(\trace{C_{S,j}}\right)^2}{\trace{\left(C_{S,j}\right)^2}}, \label{eq:tr-cond-number}
\end{align}
where the inequality holds as $T_{\widehat{\Xi}_S}^\mathsf{H}A_{\Lambda}T_{\widehat{\Xi}_S}$, $C_{S,j}T_{\widehat{\Xi}_S}^\mathsf{H}A_{\Lambda}T_{\widehat{\Xi}_S}C_{S,j}$, and $C_{S,j}$ are positive semi-definite matrices, so that by \cite[Eq. (1)]{fang1994inequalities},
\begin{align*}
      \trace{C_{S,j} T_{\widehat{\Xi}_S}^\mathsf{H}A_{\Lambda}T_{\widehat{\Xi}_S}} &\geq \lambda_{\min}(T_{\widehat{\Xi}_S}^\mathsf{H}A_{\Lambda}T_{\widehat{\Xi}_S}) \trace{C_{S,j}} 
      \intertext{and}
      \trace{C_{S,j} T_{\widehat{\Xi}_S}^\mathsf{H}A_{\Lambda}T_{\widehat{\Xi}_S}C_{S,j} T_{\widehat{\Xi}_S}^\mathsf{H}A_{\Lambda}T_{\widehat{\Xi}_S}} &\leq \lambda_{\max}(T_{\widehat{\Xi}_S}^\mathsf{H}A_{\Lambda}T_{\widehat{\Xi}_S}) \trace{C_{S,j} T_{\widehat{\Xi}_S}^\mathsf{H}A_{\Lambda}T_{\widehat{\Xi}_S}C_{S,j}} \\
      &=\lambda_{\max}(T_{\widehat{\Xi}_S}^\mathsf{H}A_{\Lambda}T_{\widehat{\Xi}_S}) \trace{(C_{S,j})^2 T_{\widehat{\Xi}_S}^\mathsf{H}A_{\Lambda}T_{\widehat{\Xi}_S}} \\
      &\leq \left(\lambda_{\max}(T_{\widehat{\Xi}_S}^\mathsf{H}A_{\Lambda}T_{\widehat{\Xi}_S})\right)^2 \trace{(C_{S,j})^2}. 
\end{align*}
Here, $\lambda_{\min}(T_{\widehat{\Xi}_S}^\mathsf{H}A_{\Lambda}T_{\widehat{\Xi}_S})$ and $\lambda_{\max}(T_{\widehat{\Xi}_S}^\mathsf{H}A_{\Lambda}T_{\widehat{\Xi}_S})$ denote the smallest and largest eigenvalue of $T_{\widehat{\Xi}_S}^\mathsf{H}A_{\Lambda}T_{\widehat{\Xi}_S}$, respectively. 
Consequently, by \cref{L:LB-1.layer},
\begin{align}\label{eq:lb-cond-sparse}
    \sepcaps{2s}{\Phi_1} \geq \min_{\substack{S \subseteq \mathcal{K} \\ \lvert S \rvert =s}}\left(\frac{\lambda_{\min}(T_{\widehat{\Xi}_S}^\mathsf{H}A_{\Lambda}T_{\widehat{\Xi}_S})}{\lambda_{\max}(T_{\widehat{\Xi}_S}^\mathsf{H}A_{\Lambda}T_{\widehat{\Xi}_S})}\right)^2 \inf_{j \in \mathcal{J}_S} \frac{2\left(\trace{C_{S,j}}\right)^2}{\trace{\left(C_{S,j}\right)^2}}. 
\end{align}
We use $(T_{\widehat{\Xi}}^\mathsf{H}A_{\Lambda}T_{\widehat{\Xi}})^{1/2}_{\mathscr{S}}$ to denote the submatrix of $(T_{\widehat{\Xi}}^\mathsf{H}A_{\Lambda}T_{\widehat{\Xi}})^{1/2}$ with columns indexed by some set $\mathscr{S}$.  
By \cref{L:ric-eigen}, we have for all $\mathscr{S}$ with $\vert \mathscr{S}\rvert = \binom{s-1+d}{d}$, 
\begin{align}  
\begin{split}\label{eq:eig-val-ric}
\lambda_{\min}\left(\left((T_{\widehat{\Xi}}^\mathsf{H}A_{\Lambda}T_{\widehat{\Xi}})^{1/2}_{\mathscr{S}}\right)^{\mathsf{H}}(T_{\widehat{\Xi}}^\mathsf{H}A_{\Lambda}T_{\widehat{\Xi}})^{1/2}_{\mathscr{S}}\right) &\geq 1-\delta_{s,d}(\Xi;\Lambda),  \\ \lambda_{\max}\left(\left((T_{\widehat{\Xi}}^\mathsf{H}A_{\Lambda}T_{\widehat{\Xi}})^{1/2}_{\mathscr{S}}\right)^{\mathsf{H}}(T_{\widehat{\Xi}}^\mathsf{H}A_{\Lambda}T_{\widehat{\Xi}})^{1/2}_{\mathscr{S}}\right) &\leq 1+\delta_{s,d}(\Xi;\Lambda).  
\end{split}
\end{align}
As $(T_{\widehat{\Xi}}^\mathsf{H}A_{\Lambda}T_{\widehat{\Xi}})^{1/2}$ is Hermitian, $((T_{\widehat{\Xi}}^\mathsf{H}A_{\Lambda}T_{\widehat{\Xi}})^{1/2}_{\mathscr{S}})^{\mathsf{H}}(T_{\widehat{\Xi}}^\mathsf{H}A_{\Lambda}T_{\widehat{\Xi}})_{\mathscr{S}}^{1/2}$ is the submatrix of $T_{\widehat{\Xi}}^\mathsf{H}A_{\Lambda}T_{\widehat{\Xi}}$ with rows and columns indexed by $\mathscr{S}$. Now, for every $S \subseteq \mathcal{K}$ with $\lvert S \rvert =s$ there is an index set $\mathscr{S}$ with $\lvert \mathscr{S} \rvert = \binom{s-1+d}{d}$ such that $(T_{\widehat{\Xi}})_{\mathscr{S}} = T_{\widehat{\Xi}_S}$. Indeed, for every such $S$, there is a matrix $\mathcal{I}_{S} \in \{0,1\}^{\binom{\lvert \mathcal{K}\rvert-1+d}{d} \times \binom{s-1+d}{d}}$ such that $v_{\lvert \mathcal{K}\rvert,d}(\iota_S z) = \mathcal{I}_{S}\,v_{s,d}(z)$, $z \in \mathbb C^s$. Therefore, for every $z \in \mathbb C^s$, we have
\begin{align*}
    v_{M,d}(\widehat{\Xi}_Sz) = v_{M,d}(\widehat{\Xi}\iota_Sz) = T_{\widehat{\Xi}}v_{\lvert \mathcal{K}\rvert,d}(\iota_Sz) = T_{\widehat{\Xi}} \mathcal{I}_{S}\,v_{s,d}(z),
\end{align*}
and thus $T_{\widehat{\Xi}_S} = T_{\widehat{\Xi}} \mathcal{I}_{S} = (T_{\widehat{\Xi}})_{\mathscr{S}}$ for some index set $\mathscr{S}$ with $\lvert \mathscr{S} \rvert = \binom{s-1+d}{d}$. 
Consequently, for every $S$ with $\lvert S\rvert =s$, the matrix $T_{\widehat{\Xi}_S}^\mathsf{H}A_{\Lambda}T_{\widehat{\Xi}_S}$ is the submatrix of $T_{\widehat{\Xi}}^\mathsf{H}A_{\Lambda}T_{\widehat{\Xi}}$ with rows and columns indexed by $\mathscr{S}$. Combining \cref{eq:lb-cond-sparse} with \cref{eq:eig-val-ric} establishes \cref{eq:lb-sparse}. It remains to verify that $\lambda_{\max}(T_{\widehat{\Xi}_S}^{\mathsf{H}} A_{\Lambda} T_{\widehat{\Xi}_S}) > 0$, so that \cref{eq:tr-cond-number,eq:lb-cond-sparse} are well-defined. If, to the
contrary, $\lambda_{\max}(T_{\widehat{\Xi}_S}^{\mathsf{H}} A_{\Lambda}
T_{\widehat{\Xi}_S}) = 0$, then positive semi-definiteness would imply $T_{\widehat{\Xi}_S}^{\mathsf{H}} A_{\Lambda} T_{\widehat{\Xi}_S} = 0$, which, in turn, yields $\delta_{s,d}(\Xi;\Lambda) = 1$, contradicting the assumption $\delta_{s,d}(\Xi;\Lambda) < 1$. This completes the proof.
\end{proof} 

\sloppy The result in \cref{thm:lb-sparse} extends readily to the first-layer feature map $\Phi^1(f) = \bigl(\rho(f\ast g_\lambda)\ast \chi\bigr)_{\lambda \in \Lambda}$, $f \in E$. To this end, set $A_{\lambda,\chi} \coloneqq \diag{\widehat{\chi}}\, A_\lambda$ and $A_{\Lambda,\chi} \coloneqq \frac{1}{M} \sum_{\lambda \in \Lambda} A_{\lambda,\chi}^{\mathsf{H}} A_{\lambda,\chi}$, and denote by $\delta_{s,d}(\Xi;\Lambda,\chi)$ the $\binom{s-1+d}{d}$th restricted isometry constant of $(T_{\widehat{\Xi}}^\mathsf{H}A_{\Lambda,\chi}T_{\widehat{\Xi}})^{1/2}$. 
\begin{corollary}
Let $E\subset \mathbb C^M$ be the set of $s$-sparse signals. Assuming that $\delta_{s,d}(\Xi;\Lambda,\chi)<1$, the first-layer feature map $\Phi^1(f) = (\rho(f \ast g_\lambda)\ast\chi)_{\lambda \in \Lambda}$, $f\in E$, satisfies
\begin{align*}
    \sepcaps{2s}{\Phi^1} \geq \left(\frac{1-\delta_{s,d}(\Xi;\Lambda,\chi)}{1+\delta_{s,d}(\Xi;\Lambda,\chi)}\right)^2 \min_{\substack{S \subseteq \mathcal{K} \\ \lvert S \rvert =s}}\inf_{j \in \mathcal{J}_S} \frac{2\left(\trace{C_{S,j}}\right)^2}{\trace{\left(C_{S,j}\right)^2}}.
\end{align*}    
\end{corollary}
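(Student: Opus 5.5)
The plan is to rerun the proof of \cref{thm:lb-sparse} with $A_\lambda$ replaced by $A_{\lambda,\chi}=\diag{\widehat{\chi}}A_\lambda$ throughout. The steps, in order, are as follows.

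First, I redo the computation in the proof of \cref{L:LB-1.layer} for $\Phi^1$. By the convolution theorem,
\begin{align*}
F_M(\rho(f\ast g_\lambda)\ast\chi)=\diag{\widehat{\chi}}F_M(\rho(f\ast g_\lambda))=A_{\lambda,\chi}v_{M,d}(\widehat f),
\end{align*}
exactly as in the second half of the proof of \cref{L:UP_one_node}. Parseval, in the form $\innerprod{u}{v}=\frac1M\innerprod{F_Mu}{F_Mv}$, then gives
\begin{align*}
\innerprod{\Phi^1(f_1)}{\Phi^1(f_2)}=\innerprod{A_{\Lambda,\chi}v_{M,d}(\widehat{f_1})}{v_{M,d}(\widehat{f_2})}.
\end{align*}
With this, the trace manipulations \cref{eq:LB-1layer-denominator,eq:LB-1layer-numerator} go through verbatim. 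Combined with the complex form of \cref{cor:lower-bound-real-analytic}, they yield
\begin{align*}
\sepcaps{2s}{\Phi^1}\geq\inf_{S}\inf_{j} \frac{2(\trace{G_{S,j}A_{\Lambda,\chi}})^2}{\trace{(G_{S,j}A_{\Lambda,\chi})^2}}.
\end{align*}
This step needs $\Phi^1$ to satisfy the hypotheses of \cref{cor:real-analytic}. That holds because $\Phi^1$ is the composition of a linear convolution with a monomial and is therefore polynomial in $(\Re f,\Im f)$; it is real-analytic on all of $\mathbb R^{2s}$ after composing with $\psi_{S,j}$.

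Second, I factor $G_{S,j}$ by the area formula exactly as in the proof of \cref{thm:lb-sparse}: $G_{S,j}=\mathbf J(\Xi_S)T_{\widehat\Xi_S}C_{S,j}T_{\widehat\Xi_S}^{\mathsf H}$. The scalar Jacobian cancels in the ratio. Cyclicity of the trace and the eigenvalue trace inequalities from \cite{fang1994inequalities} then give the analogue of \cref{eq:lb-cond-sparse}, with the condition number of $T_{\widehat\Xi_S}^{\mathsf H}A_{\Lambda,\chi}T_{\widehat\Xi_S}$ in place of the one for $A_\Lambda$. Here I only use that $A_{\Lambda,\chi}=\frac1M\sum_\lambda A_{\lambda,\chi}^{\mathsf H}A_{\lambda,\chi}$ is positive semidefinite, so that $T^{\mathsf H}A_{\Lambda,\chi}T$ is positive semidefinite and Hermitian with a Hermitian square root.

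Third, I use the identity $T_{\widehat\Xi_S}=T_{\widehat\Xi}\mathcal I_S$ from the proof of \cref{thm:lb-sparse}. It shows that $T_{\widehat\Xi_S}^{\mathsf H}A_{\Lambda,\chi}T_{\widehat\Xi_S}$ is a principal submatrix of $T_{\widehat\Xi}^{\mathsf H}A_{\Lambda,\chi}T_{\widehat\Xi}$ of size $\binom{s-1+d}{d}$. Applying \cref{L:ric-eigen} to $(T_{\widehat\Xi}^{\mathsf H}A_{\Lambda,\chi}T_{\widehat\Xi})^{1/2}$ bounds its eigenvalues within $[1-\delta_{s,d}(\Xi;\Lambda,\chi),\,1+\delta_{s,d}(\Xi;\Lambda,\chi)]$, which gives the stated ratio.

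The only point needing care, and the expected obstacle, is well-definedness, i.e.\ $\lambda_{\max}>0$. If $\chi$ kills many frequencies, $A_{\Lambda,\chi}$ could annihilate the range of $T_{\widehat\Xi_S}$. The assumption $\delta_{s,d}(\Xi;\Lambda,\chi)<1$ excludes this exactly as before, since a zero principal submatrix would force the restricted isometry constant to equal $1$. Because the lower eigenvalue bound is then positive, the numerator trace is also positive and the ratio is legitimate.
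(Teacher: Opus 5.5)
Your proposal is correct and matches the paper's proof: the paper also notes that $F_M(\rho(f\ast g_\lambda)\ast\chi)=\diag{\widehat{\chi}}A_\lambda v_{M,d}(\widehat{f})$ and reruns the proof of \cref{thm:lb-sparse} with $A_{\Lambda,\chi}$ in place of $A_\Lambda$. Your additional checks spell out what the paper's ``verbatim'' leaves implicit: the Parseval identity for $\Phi^1$, positive semidefiniteness of $A_{\Lambda,\chi}$, and positivity of the numerator trace via $1-\delta_{s,d}(\Xi;\Lambda,\chi)>0$.
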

\begin{proof}
    Note that $F_M(\rho(f \ast g_\lambda)\ast\chi) = \diag{\widehat{\chi}} A_\lambda v_{M,d}(\widehat{f})$. Hence the proof of \cref{thm:lb-sparse} carries over verbatim upon replacing $A_\Lambda$ with $A_{\Lambda,\chi}$, which yields the claim.
\end{proof}

\sloppy The lower bound on $\sepcaps{2s}{\Phi^1}$ therefore suggests the following design criterion: choose $\{\chi\}\cup\{g_\lambda\}_{\lambda \in \Lambda}$ so that the $\binom{s-1+d}{d}$th restricted isometry constant of $(T_{\widehat{\Xi}}^\mathsf{H}A_{\Lambda,\chi}T_{\widehat{\Xi}})^{1/2}$ is as small as possible.
 
\subsection{Rectifiable sets}
We now replace the sparse-signal model by a countably $\mathcal{H}^s$-rectifiable set $E \subseteq \mathbb C^M$. We assume that $E$ admits a bi-Lipschitz parametrization $\{\psi_j \colon K_j \to \psi_j(K_j)\}_{j \in \mathcal{J}}$, with $K_j \subset \mathbb R^s$ compact, and that each $\psi_j$ is a polynomial of degree $n_j$, i.e., 
\begin{align*}
\psi_j(x) = \sum_{\alpha \colon \lvert \alpha \rvert \leq n_j} c_{j,\alpha}x^\alpha, \quad x \in K_j, \text{ for some }c_{j,\alpha} \in \mathbb C^M.
\end{align*} 
\paragraph{Upper bound on $\sepcaps{s}{\Phi}$}
For $\Phi\colon E\to \mathbb C^{M'}$ real-analytic, we have by \cite{haberle2026function}, 
\begin{align*}
    \sepcaps{s}{\Phi} = \sepcaps{s}{\widetilde{\Phi}} = \min_{j \in \mathcal{J}}
    \sepcap{\widetilde\Phi \circ \widetilde{\psi}_j} = 2\min_{j \in \mathcal{J}}
    \dimR{\spanR{(\widetilde{\Phi}\circ \widetilde{\psi}_j)(K_j)}},
\end{align*}
where $\widetilde{\psi}_j \coloneqq \iota \circ \psi_j$.
By the same approach as in the derivation of \cref{eq:sc-sparse-3}, we obtain
\begin{align}\label{eq:dimspanC-rect}
    \sepcaps{s}{\Phi} = 2\min_{j \in \mathcal{J}} \dimC{\spanC{\left\{\begin{pmatrix}
        \varphi^\mathsf{T},\overline{\varphi}^\mathsf{T}
    \end{pmatrix}^\mathsf{T}\colon \varphi \in \Phi(E_j)\right\}}}, 
\end{align}
where $E_j \coloneqq \psi_j(K_j)$.

To develop an upper bound on $\sepcaps{s}{\Phi}$, we again start with a one-node estimate. Let $H_{\lambda,\psi_j} \coloneqq (\bigcup_{x \in K_j} \suppinline{\widehat{\psi_j(x)}}) \cap \supp{\widehat{g_\lambda}}$, and define 
\begin{align*}
    H_{d,\lambda,\psi_j} = \underbrace{H_{\lambda,\psi_j}+ \cdots +H_{\lambda,\psi_j}}_{d \text{ times}}.
\end{align*}
\begin{lemma}\label{L:node-rect}
    Suppose that $\psi_j \colon K_j \to E_j$ is a polynomial of degree $n_j$, $j \in \mathcal{J}$. Then, for each $j \in \mathcal{J}$, 
    \begin{align}
        \dimC{\spanC{\left\{\rho(f\ast g_\lambda) \colon f \in E_j\right\}}} &\leq \min\left\{\binom{s+n_jd}{n_jd}, \lvert H_{d,\lambda,\psi_j} \rvert\right\} \label{eq:node_rect1}\\
        \intertext{and}
        \dimC{\spanC{\left\{\rho(f\ast g_\lambda)\ast \chi \colon f \in E_j\right\}}} &\leq \min\left\{\binom{s+n_jd}{n_jd}, \lvert H_{d,\lambda,\psi_j} \cap \supp{\widehat{\chi}}\rvert\right\}. \label{eq:node_rect2}
    \end{align}
\end{lemma}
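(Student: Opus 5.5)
We follow the proof of \cref{L:UP_one_node}, replacing the linear parametrization $\sigma_S$ by the polynomial $\psi_j$. As there, $F_M$ is invertible, so it suffices to bound the dimension of the span of $\{F_M(\rho(f\ast g_\lambda))\colon f\in E_j\}$. The computation in the proof of \cref{L:UP_one_node} gives
\begin{align*}
F_M(\rho(f\ast g_\lambda)) = A\, v_{M,d}(\widehat f), \qquad f \in E_j,
\end{align*}
where $A$ is the matrix with entries $M^{1-d}\binom{d}{\alpha}\widehat{g_\lambda}^\alpha$ in row $k$ whenever $\sum_t t\alpha_t\equiv k \Mod{M}$. In that matrix we now keep only the columns $\alpha$ with $\suppinline{\alpha}\subseteq H_{\lambda,\psi_j}$. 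This truncation loses nothing. If $\alpha_t>0$ for some $t\notin H_{\lambda,\psi_j}$, then either $(\widehat{g_\lambda})_t=0$, or $t\notin\suppinline{\widehat{\psi_j(x)}}$ for every $x\in K_j$. In the latter case $\widehat f^\alpha=0$ for all $f\in E_j$. Either way the product $\widehat{g_\lambda}^\alpha\widehat f^\alpha$ vanishes on $E_j$.

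\textbf{Frequency bound.} Each column of $A$ has at most one nonzero entry. Hence, exactly as in \cref{eq:rank_A_struct_inputs}, the rank of $A$ equals the number of nonzero rows, and that number is at most
\begin{align*}
\bigl\lvert\{\textstyle\sum_t t\alpha_t \bmod M : |\alpha|=d,\ \suppinline{\alpha}\subseteq H_{\lambda,\psi_j}\}\bigr\rvert = \lvert H_{d,\lambda,\psi_j}\rvert .
\end{align*}
Since the span of the image is contained in the range of $A$, this gives the second term of the minimum in \cref{eq:node_rect1}.

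\textbf{Parametrization bound.} Compose with $\psi_j$ and use $f=\psi_j(x)$. Each coordinate of $\widehat{\psi_j(x)}=F_M\psi_j(x)$ is a polynomial in $x\in\mathbb R^s$ of degree at most $n_j$. Each coordinate of $\rho(f\ast g_\lambda)$ is a $d$th power of a linear combination of these coordinates, so it is a polynomial in $x$ of degree at most $n_jd$. Therefore
\begin{align*}
\rho(\psi_j(x)\ast g_\lambda) = B_j\, m_{n_jd}(x),
\end{align*}
where $m_{n_jd}(x)=(x^\beta)_{|\beta|\le n_jd}$ is the vector of all monomials in $s$ real variables of degree at most $n_jd$, and $B_j\in\mathbb C^{M\times\binom{s+n_jd}{n_jd}}$ is a fixed coefficient matrix. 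The image of $E_j$ therefore lies in the column space of $B_j$. That column space has dimension at most the number of monomials, $\binom{s+n_jd}{n_jd}$, which is the first term of \cref{eq:node_rect1}. Combining the two bounds proves \cref{eq:node_rect1}.

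\textbf{Filtered version.} For \cref{eq:node_rect2} we have
\begin{align*}
F_M(\rho(f\ast g_\lambda)\ast\chi)=\diag{\widehat\chi}\,A\,v_{M,d}(\widehat f).
\end{align*}
The factor $\diag{\widehat\chi}$ zeroes out the rows outside $\suppinline{\widehat\chi}$. The rank bound becomes $\lvert H_{d,\lambda,\psi_j}\cap\suppinline{\widehat\chi}\rvert$, exactly as in the proof of \cref{eq:bound_1node_filtered}. For the monomial bound, convolution with $\chi$ is linear, so the image is contained in $\{(B_j c)\ast\chi\}$ and its dimension is still at most $\binom{s+n_jd}{n_jd}$.

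\textbf{Main subtlety.} The delicate step is the support restriction. We must check that restricting to $\alpha$ with $\suppinline{\alpha}\subseteq H_{\lambda,\psi_j}$ is legitimate on all of $E_j$, and this relies on $H_{\lambda,\psi_j}$ taking the union of the spectral supports over all $x\in K_j$. We only need an upper bound, so we need the inequality ``$\le$'' for the rank, not the equality of \cref{eq:rank_A_struct_inputs}. The monomial count is a crude bound; no analogue of the Veronese dimension formula \cref{eq:dim_vd_FS_struct_inputs} is needed.
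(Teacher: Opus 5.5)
Your proposal is correct and follows essentially the same route as the paper. Both use the DFT factorization $F_M(\rho(f\ast g_\lambda)) = A\,v_{M,d}(\widehat f)$, whose nonzero rows are indexed by $H_{d,\lambda,\psi_j}$, for the frequency bound; a count of monomials of degree at most $n_jd$ in $s$ variables for the parametrization bound; and $\diag{\widehat\chi}$ for the filtered case. The only differences are cosmetic: you expand $\rho(\psi_j(x)\ast g_\lambda)$ directly in monomials rather than bounding the span of $v_{M,d}(F_M(E_j))$, and you explicitly justify dropping the columns with $\suppinline{\alpha}\not\subseteq H_{\lambda,\psi_j}$, which the paper builds into the definition of $A$ without comment.
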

\begin{proof}
    As in the proof of \cref{L:UP_one_node}, we take the DFT and obtain
    \begin{align*}
        F_M(\rho(f \ast g_\lambda)) = A v_{M,d}(\widehat{f}), \quad f \in E_j,
    \end{align*}
    where $A \in \mathbb C^{M \times \binom{M-1+d}{d}}$ with entries
    \begin{align*}
        A_{k\alpha} = \begin{cases}
            M^{1-d} \binom{d}{\alpha} \widehat{g_\lambda}^\alpha, &\text{if $\sum_{t=0}^{M-1}t \alpha_t \equiv k \Mod{M}$ and $\suppinline{\alpha} \subseteq H_{\lambda,\psi_j}$,} \\
            0, &\text{otherwise,}
        \end{cases}
    \end{align*}
    for $k \in \{0,\dots,M-1\}$ and $\alpha \in \mathbb N_0^M$ with $\lvert \alpha \rvert =d$ in degree lexicographic order. Then, by the same argument as in the proof of \cref{L:UP_one_node}, $\rank{A} = \lvert H_{d,\lambda,\psi_j} \rvert$. Furthermore, it holds that
    \begin{align*}
        \dimC{\spanC{v_{M,d}(F_M(E_j))}} \leq \sum_{\ell = 0}^{n_jd}\binom{s-1+\ell}{\ell} = \binom{s+n_jd}{n_jd},
    \end{align*}
    as $x\mapsto v_{M,d}(F_M(\psi_j(x)))$ is a multivariate polynomial of degree at most $n_jd$. This establishes \cref{eq:node_rect1}. The assertion in \cref{eq:node_rect2} holds because $F_M(\rho(f\ast g_\lambda)\ast \chi) = \diag{\widehat{\chi}}A v_{M,d}(\widehat{f})$, so that $\rank{\diag{\widehat{\chi}}A} = \lvert H_{d,\lambda,\psi_j} \cap \supp{\widehat{\chi}}\rvert$.
\end{proof}
\begin{theorem}
    For the scattering network $\Phi$ of depth $n_\mathrm{d}$, we have
\begin{align*}
    \sepcaps{s}{\Phi} &\leq 4 \min_{j \in \mathcal{J}} \left(\min\left\{\binom{s+n_j}{n_j},
    \left\lvert \left(\bigcup_{x \in K_j} \supp{\widehat{\psi_j(x)}}\right)
    \cap \supp{\widehat{\chi}} \right\rvert \right\}
    \vphantom{\sum_{n=1}^{n_\mathrm{d}}\sum_{(p,\lambda_n) \in \Lambda^{n-1}\times\Lambda}  \left(\min\left\{\binom{s+n_jd^n}{n_jd^n},\left\lvert H_{d,\lambda_n,U[p]\circ\psi_j} \cap \supp{\widehat{\chi}} \right\rvert \right\}\right)}\right. \\&\qquad \left.+ \sum_{n=1}^{n_\mathrm{d}}\sum_{(p,\lambda_n) \in \Lambda^{n-1}\times\Lambda}  \left(\min\left\{\binom{s+n_jd^n}{n_jd^n},\left\lvert H_{d,\lambda_n,U[p]\circ\psi_j} \cap \supp{\widehat{\chi}} \right\rvert \right\}\right)\right).
\end{align*} 
\end{theorem}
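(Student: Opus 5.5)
The proof will mirror the sparse-signal upper bound. I start from \cref{eq:dimspanC-rect}, which expresses $\sepcaps{s}{\Phi}$ as twice a minimum over $j \in \mathcal{J}$ of $\dim_{\mathbb C}$ of the span of the stacked vectors $(\varphi^\mathsf{T},\overline{\varphi}^\mathsf{T})^\mathsf{T}$, $\varphi \in \Phi(E_j)$. Since $\overline{\varphi}$ ranges over the complex conjugates of the $\varphi$'s, this dimension is at most $2\,\dimC{\spanC{\Phi(E_j)}}$. The latter is at most the sum over the nodes of the scattering tree of $\dimC{\spanC{\{(U[q]f)\ast\chi \colon f\in E_j\}}}$, because $\Phi(f)$ is the concatenation of the node outputs. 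This accounts for the factor $4$. It then remains to bound each node's contribution for a fixed $j$.

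The root node outputs $f\ast\chi$ for $f\in E_j$. On the one hand, the DFT gives $\widehat{f\ast\chi} = \diag{\widehat{\chi}}\widehat{f}$, and $\widehat{f}$ is supported in $\bigcup_{x\in K_j}\suppinline{\widehat{\psi_j(x)}}$. The span therefore lies in a coordinate subspace of dimension $\lvert(\bigcup_{x}\suppinline{\widehat{\psi_j(x)}})\cap\suppinline{\widehat{\chi}}\rvert$. On the other hand, $x\mapsto \psi_j(x)\ast\chi$ is a polynomial of degree at most $n_j$ in $s$ variables. Its span thus has dimension at most the number of monomials of degree at most $n_j$, namely $\binom{s+n_j}{n_j}$. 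Taking the minimum gives the first term.

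For a node $(p,\lambda_n)\in\Lambda^{n-1}\times\Lambda$ with $n\ge1$, I set $\psi'_j \coloneqq U[p]\circ\psi_j$. The node output is then $\rho(\psi'_j(x)\ast g_{\lambda_n})\ast\chi$. Since $\rho$ is the monomial $z\mapsto z^d$ and there is no pooling, each layer multiplies the polynomial degree by $d$. So $\psi'_j$ is a polynomial in $x$ of degree at most $n_jd^{n-1}$. I then apply \cref{L:node-rect}, specifically \cref{eq:node_rect2}, with $\psi_j$ replaced by $\psi'_j$ and $n_j$ replaced by $n_jd^{n-1}$. This yields the bound $\min\{\binom{s+n_jd^n}{n_jd^n}, \lvert H_{d,\lambda_n,U[p]\circ\psi_j}\cap\suppinline{\widehat{\chi}}\rvert\}$, where $H_{d,\lambda_n,U[p]\circ\psi_j}$ is defined as before but with $\psi_j$ replaced by $U[p]\circ\psi_j$.

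The main obstacle is to check that \cref{L:node-rect} applies to $\psi'_j$, which is not bi-Lipschitz and need not parametrize anything. Inspecting its proof, only two facts are used: the Fourier-support structure of the matrix $A$, and that $x\mapsto v_{M,d}(F_M\psi'_j(x))$ is polynomial of degree at most $n_jd^{n-1}\cdot d$. Both hold for an arbitrary polynomial map on $K_j$, so the lemma transfers verbatim. To complete the proof, I sum the node bounds, invoke \cref{eq:dimspanC-rect}, and take the minimum over $j$.
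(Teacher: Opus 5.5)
Your proposal is correct and follows the paper's own proof. As there, you bound the root node by the degree-$n_j$ monomial count and the Fourier support of $\widehat{\chi}$, apply the proof of \cref{L:node-rect} to $U[p]\circ\psi_j$ (a polynomial of degree at most $n_jd^{n-1}$), sum over the tree, and invoke \cref{eq:dimspanC-rect}. You also spell out two steps the paper leaves implicit: where the factor $4$ comes from (the conjugate-stacked span has dimension at most $2\,\dimC{\spanC{\Phi(E_j)}}$), and why \cref{L:node-rect} needs no bi-Lipschitz property.
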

\begin{proof}
    \sloppy The input of the node indexed by $(p,\lambda_n) \in \Lambda^{n-1}\times\Lambda$ is parametrized by $U[p]\circ\psi_j$, which is a polynomial of degree at most $n_j d^{\,n-1}$. Indeed, $U[p]$ consists of $n-1$ node operations \cref{eq:operation-node}, each of which multiplies the degree by at most $d$. 
    Note that the proof of \cref{L:node-rect} directly applies to $U[p]\circ\psi_j$ in place of $\psi_j$ and bounds the span dimension of this node's outputs by $\min\{\binom{s+n_jd^{\,n}}{n_jd^{\,n}}, \lvert H_{d,\lambda_n,\,U[p]\circ\psi_j} \cap \suppinline{\widehat{\chi}}\rvert\}$.
    The output of the root node takes the form $\psi_j(x)\ast\chi$, for $x \in K_j$, which is a polynomial of degree at most $n_j$ in $x$ and whose Fourier transform is supported in $(\bigcup_{x \in K_j} \suppinline{\widehat{\psi_j(x)}}) \cap \suppinline{\widehat{\chi}}$. Hence it contributes at most $\min\{\binom{s+n_j}{n_j},\lvert (\bigcup_{x \in K_j} \suppinline{\widehat{\psi_j(x)}}) \cap \suppinline{\widehat{\chi}} \rvert\}$. 
    Summing over all nodes of the tree and applying \cref{eq:dimspanC-rect} yields the desired bound.
\end{proof}
The nonlinear nature of $E$ is captured in this upper bound by the degrees of the maps $\psi_j$ and the sets $H_{d,\lambda_n,U[p]\circ\psi_j}$. Both a higher degree
of $\psi_j$ and larger sets $H_{d,\lambda_n,U[p]\circ\psi_j}$ reflect a richer geometry, and both raise the upper bound. As the degree of $U[p]\circ\psi_j$ grows like $n_jd^{n-1}$ along the tree, both effects are amplified with depth.

\paragraph{Lower bound on $\sepcaps{s}{\Phi}$} 
\sloppy \cref{cor:lower-bound-real-analytic}, together with the bound
$\lvert \Re(\innerprod{\Phi(f_1)}{\Phi(f_2)}) \rvert \leq \lvert \innerprod{\Phi(f_1)}{\Phi(f_2)} \rvert$, yields
\begin{align*}
    \sepcaps{s}{\Phi} \geq \inf_{j \in \mathcal{J}}
    \frac{2 \left(\int_{E_{j}} \left\lVert \Phi(f)\right\rVert^2\,\mathrm{d}\mathcal{H}^{s}(f)\right)^2}
    {\int_{E_{j}}\int_{E_{j}} \left\lvert \innerprod{\Phi(f_1)}{\Phi(f_2)}\right\rvert^2 \,\mathrm{d}\mathcal{H}^{s}(f_1)\,\mathrm{d}\mathcal{H}^{s}(f_2)}.
\end{align*}
We first consider again the unfiltered first-layer map $\Phi_1(f)=(\rho(f \ast g_\lambda))_{\lambda \in \Lambda}$, $f\in E$. With 
\begin{align*}
    G_j \coloneqq \int_{E_j} v_{M,d}(\widehat{f})v_{M,d}(\widehat{f})^{\mathsf{H}} \,\mathrm{d}\mathcal{H}^s(f), 
\end{align*}
the same computation as in the proof of \cref{L:LB-1.layer} particularizes the lower
bound for $\Phi_1$ to 
\begin{align}\label{eq:lb-rect}
    \sepcaps{s}{\Phi_1} \geq \inf_{j \in \mathcal{J}} \frac{2 (\trace{G_j A_\Lambda})^2}{\trace{(G_j A_\Lambda)^2}}.
\end{align}

As in the sparse case, it is not immediate how $G_j$ depends on the geometry of $E$. To isolate this dependence, let $T_{\widehat{\psi}_j}$ be the unique matrix satisfying
\begin{align*}
    v_{M,d}(\widehat{\psi_j(x)}) = T_{\widehat{\psi}_j} w_{s,n_jd}(x),
\end{align*}
where $w_{s,n_jd}(x) = (x^\alpha)_{\lvert \alpha\rvert \leq n_jd}$, $x \in \mathbb R^s$.
In particular, $T_{\widehat{\psi}_j}$ depends on the coefficients $(c_{j,\alpha})_{|\alpha|\leq n_j} \subset \mathbb{C}^M$ of the parametrization $\psi_j$. We further set
\begin{align*}
    M_j \coloneqq \int_{K_j} w_{s,n_jd}(x)\, w_{s,n_jd}(x)^{\mathsf{H}} \, \mathrm{d}x,
\end{align*}
which, unlike $T_{\widehat{\psi}_j}$, depends only on the domain $K_j$ of the parametrization $\psi_j$ and not on its coefficients. Let $\lambda_{\min}(T_{\widehat{\psi}_j}^\mathsf{H} A_\Lambda T_{\widehat{\psi}_j})$ and $\lambda_{\max}(T_{\widehat{\psi}_j}^\mathsf{H} A_\Lambda T_{\widehat{\psi}_j})$ denote the smallest and largest eigenvalue of $T_{\widehat{\psi}_j}^\mathsf{H} A_\Lambda T_{\widehat{\psi}_j}$, respectively. 
\begin{theorem}\label{thm:lb-rect}
\sloppy Let $E \subseteq \mathbb C^M$ be countably $\mathcal{H}^s$-rectifiable with bi-Lipschitz parametrization $\{\psi_j \colon K_j \to E_j\}_{j \in \mathcal{J}}$, where each $\psi_j$ is $\lipbound$-bi-Lipschitz. Suppose that $\psi_j \colon K_j \to E_j$ is a polynomial of degree $n_j$, $j \in \mathcal{J}$, and consider $\Phi_1(f) = (\rho(f \ast g_\lambda))_{\lambda \in \Lambda}$, $f\in E$.
\begin{enumerate}[label=(\alph*)]
    \item If $\Phi_1$ vanishes $\mathcal{H}^s$-a.e.\ on $E_j$, for some
    $j\in\mathcal{J}$, then $\sepcaps{s}{\Phi_1}=0$.
    \item If $\Phi_1$ does not vanish $\mathcal{H}^s$-a.e.\ on $E_j$, for any
    $j\in\mathcal{J}$, then
    \begin{align*}
    \sepcaps{s}{\Phi_1} \geq \lipbound^{-4s} \inf_{j \in \mathcal{J}} \left(\frac{\lambda_{\min}\left(T_{\widehat{\psi}_j}^\mathsf{H} A_\Lambda T_{\widehat{\psi}_j}\right)}{\lambda_{\max}\left(T_{\widehat{\psi}_j}^\mathsf{H} A_\Lambda T_{\widehat{\psi}_j}\right)}\right)^2 \frac{2(\trace{M_j})^2}{\trace{(M_j)^2}}.
\end{align*}
\end{enumerate}
\end{theorem}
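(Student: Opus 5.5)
For part (a), I would argue directly from \cref{L:sepcap}. If $\Phi_1$ vanishes $\mathcal{H}^s$-a.e.\ on $E_j$, then the zero subspace $V=\{0\}$ satisfies $(\pushfwdinline{\restrmeasure{\mathcal{H}^s}{E}}{\widetilde{\Phi}_1})(\{0\}) \geq \mathcal{H}^s(E_j)>0$. Here $\mathcal{H}^s(E_j)>0$ holds because $E_j$ is the bi-Lipschitz image of the compact set $K_j$; I assume, as implicit in \cref{L:bi-Lip}, that each piece has positive measure, since null pieces can be discarded. The minimum in \cref{eq:sepcap-form1} is then $0$, so $\sepcaps{s}{\Phi_1}=0$.

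For part (b), I would start from \cref{eq:lb-rect}, which holds under the hypotheses of \cref{cor:real-analytic}: $\Phi_1\circ\psi_j$ is polynomial and therefore real-analytic on all of $\mathbb R^s$. The first step is to rewrite $G_j$ through the parametrization. By the area formula and the relation $v_{M,d}(\widehat{\psi_j(x)}) = T_{\widehat{\psi}_j} w_{s,n_jd}(x)$, we get
\begin{align*}
G_j = T_{\widehat{\psi}_j}\left(\int_{K_j} w_{s,n_jd}(x)w_{s,n_jd}(x)^{\mathsf H}\,\mathbf{J}(\mathrm{d}(\psi_j)_x)\,\mathrm{d}x\right)T_{\widehat{\psi}_j}^{\mathsf H} \eqqcolon T_{\widehat{\psi}_j}\widetilde{M}_j T_{\widehat{\psi}_j}^{\mathsf H}.
\end{align*}
The cyclic property of the trace then turns the ratio in \cref{eq:lb-rect} into $(\traceinline{\widetilde M_j B_j})^2/\traceinline{(\widetilde M_j B_j)^2}$ with $B_j \coloneqq T_{\widehat{\psi}_j}^{\mathsf H}A_\Lambda T_{\widehat{\psi}_j}$ positive semi-definite. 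Next I would apply the same trace inequalities \cite{fang1994inequalities} as in the proof of \cref{thm:lb-sparse}, namely $\traceinline{\widetilde M_jB_j}\geq\lambda_{\min}(B_j)\traceinline{\widetilde M_j}$ and $\traceinline{(\widetilde M_jB_j)^2}\leq\lambda_{\max}(B_j)^2\traceinline{\widetilde M_j^2}$. These give the factor $(\lambda_{\min}/\lambda_{\max})^2$ times $(\traceinline{\widetilde M_j})^2/\traceinline{\widetilde M_j^2}$.

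The remaining step is to replace $\widetilde M_j$ by $M_j$, and this is where the factor $\lipbound^{-4s}$ comes from. Since $\psi_j$ is $\lipbound$-bi-Lipschitz, for a.e.\ $x$ the differential satisfies $\lipbound^{-1}\lvert v\rvert\leq\lvert\mathrm{d}(\psi_j)_xv\rvert\leq\lipbound\lvert v\rvert$, so $\lipbound^{-s}\leq\mathbf{J}(\mathrm{d}(\psi_j)_x)\leq\lipbound^{s}$. As a result, $\lipbound^{-s}M_j\preceq\widetilde M_j\preceq\lipbound^sM_j$ in Loewner order. Loewner monotonicity then gives $\traceinline{\widetilde M_j}\geq\lipbound^{-s}\traceinline{M_j}$. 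For the quadratic term, $0\preceq X\preceq Y$ implies $\traceinline{X^2}\leq\traceinline{XY}\leq\traceinline{Y^2}$, which yields $\traceinline{\widetilde M_j^2}\leq\lipbound^{2s}\traceinline{M_j^2}$. Together these give $(\traceinline{\widetilde M_j})^2/\traceinline{\widetilde M_j^2}\geq\lipbound^{-4s}(\traceinline{M_j})^2/\traceinline{M_j^2}$, and taking the infimum over $j$ gives the claim.

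The main obstacle is well-definedness: we need $\lambda_{\max}(B_j)>0$ and $\traceinline{M_j^2}>0$. This is exactly where the hypothesis of part (b) enters. If $B_j=0$, then $\int_{E_j}\lVert\Phi_1\rVert^2\,\mathrm{d}\mathcal{H}^s=\traceinline{G_jA_\Lambda}=\traceinline{\widetilde M_jB_j}=0$, so $\Phi_1$ would vanish a.e.\ on $E_j$, contradicting the hypothesis. Also, $M_j\neq0$ because $\mathcal{L}^s(K_j)>0$, which follows from $\mathcal{H}^s(E_j)>0$ and the Lipschitz continuity of $\psi_j$. If $\lambda_{\min}(B_j)=0$, the bound is trivially true. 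I would therefore handle these degenerate cases first and then run the chain of inequalities above.
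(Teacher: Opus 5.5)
Your proposal is correct and uses the same ingredients as the paper's proof. For part (a) you use the zero subspace, which is equivalent to applying \cref{eq:sepcap-form2} with $A=\{f\in E_j\colon\Phi_1(f)=0\}$. For part (b) you use the area-formula factorization of $G_j$, the Jacobian bounds $\lipbound^{-s}\leq\mathbf{J}(\mathrm{d}(\psi_j)_x)\leq\lipbound^{s}$ in Loewner order, the trace inequalities behind \cref{eq:tr-cond-number}, and the identity $\traceinline{G_jA_\Lambda}=\int_{E_j}\lVert\Phi_1\rVert^2\,\mathrm{d}\mathcal{H}^s$ to rule out $T_{\widehat{\psi}_j}^{\mathsf H}A_\Lambda T_{\widehat{\psi}_j}=0$.

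The only difference is the order of the two comparisons. The paper first sandwiches $G_j$ between $\lipbound^{-s}$ and $\lipbound^{s}$ multiples of $T_{\widehat{\psi}_j}M_jT_{\widehat{\psi}_j}^{\mathsf H}$ inside the traces against $A_\Lambda$, and then applies the condition-number bound with $M_j$. You instead apply the condition-number bound to the Jacobian-weighted $\widetilde M_j$ first and then compare $\widetilde M_j$ with $M_j$ directly. This costs the same factor $\lipbound^{-4s}$, and it makes your nondegeneracy check, via the exact identity $\traceinline{G_jA_\Lambda}=\traceinline{\widetilde M_jB_j}$, slightly more direct.
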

\begin{proof}
\begin{enumerate}[label=(\alph*)]
    \item The assertion follows from \cref{eq:sepcap-form2}, applied with $A = \{f \in E_j \colon \Phi_1(f) = 0\}$, upon noting that $\mathcal{H}^s(A) = \mathcal{H}^s(E_j) > 0$ and $\dimRinline{\spanRinline{\Phi_1(A)}} = 0$.
    \item Application of the area formula gives 
\begin{align*}
    G_j &= \int_{K_j} v_{M,d}(\widehat{\psi_j(x)})v_{M,d}(\widehat{\psi_j(x)})^{\mathsf{H}} \mathbf{J}(\mathrm{d}(\psi_j)_x)\,\mathrm{d}x \\
    &= T_{\widehat{\psi}_j} \int_{K_j} w_{s,n_jd}(x)w_{s,n_jd}(x)^{\mathsf{H}} \mathbf{J}(\mathrm{d}(\psi_j)_x)\,\mathrm{d}x\; T_{\widehat{\psi}_j}^\mathsf{H}. 
\end{align*}
As $\psi_j\colon K_j \to E_j$ is $\lipbound$-bi-Lipschitz, we have $\lipbound^{-s} \leq \mathbf{J}(\mathrm{d}(\psi_j)_x) \leq \lipbound^s$, for $\mathcal{L}^s$-a.e. $x\in K_j$ and $j \in \mathcal{J}$. Thus,
\begin{align*}
    \lipbound^{-s} T_{\widehat{\psi}_j} M_j T_{\widehat{\psi}_j}^\mathsf{H} \preceq G_j \preceq \lipbound^s T_{\widehat{\psi}_j} M_j T_{\widehat{\psi}_j}^\mathsf{H}. 
\end{align*}
As $A_\Lambda$ is positive semi-definite, we have
\begin{align}
    \trace{G_jA_\Lambda} &\geq \lipbound^{-s} \trace{T_{\widehat{\psi}_j} M_j T_{\widehat{\psi}_j}^\mathsf{H}A_\Lambda} \nonumber\\
    &=\lipbound^{-s} \trace{M_j T_{\widehat{\psi}_j}^\mathsf{H} A_\Lambda T_{\widehat{\psi}_j}}. \label{eq:GA-lb}
\end{align}
Furthermore, since $A_\Lambda G_j A_\Lambda$ and $A_\Lambda T_{\widehat{\psi}_j} M_j T_{\widehat{\psi}_j}^\mathsf{H} A_\Lambda$ are positive semi-definite, we have
\begin{align*}
    \trace{G_jA_\Lambda G_jA_\Lambda} &\leq \lipbound^{s}\trace{T_{\widehat{\psi}_j} M_j T_{\widehat{\psi}_j}^\mathsf{H} A_\Lambda G_jA_\Lambda} \\
    &= \lipbound^{s}\trace{A_\Lambda T_{\widehat{\psi}_j} M_j T_{\widehat{\psi}_j}^\mathsf{H} A_\Lambda G_j} \\
    &\leq \lipbound^{2s}\trace{A_\Lambda T_{\widehat{\psi}_j} M_j T_{\widehat{\psi}_j}^\mathsf{H} A_\Lambda T_{\widehat{\psi}_j} M_j T_{\widehat{\psi}_j}^\mathsf{H}} \\
    &= \lipbound^{2s}\trace{ M_j T_{\widehat{\psi}_j}^\mathsf{H} A_\Lambda T_{\widehat{\psi}_j} M_j T_{\widehat{\psi}_j}^\mathsf{H}A_\Lambda T_{\widehat{\psi}_j}} \\
    &= \lipbound^{2s}\trace{\left(M_j T_{\widehat{\psi}_j}^\mathsf{H} A_\Lambda T_{\widehat{\psi}_j}\right)^2}.
\end{align*}
Thus, 
\begin{align*}
    \frac{\left(\trace{G_jA_\Lambda}\right)^2}{\trace{\left(G_jA_\Lambda\right)^2}} &\geq \frac{\lipbound^{-2s} \left(\trace{M_j T_{\widehat{\psi}_j}^\mathsf{H} A_\Lambda T_{\widehat{\psi}_j}}\right)^2}{\lipbound^{2s}\trace{\left(M_j T_{\widehat{\psi}_j}^\mathsf{H} A_\Lambda T_{\widehat{\psi}_j}\right)^2}}.
\end{align*}
Application of \cref{eq:tr-cond-number} together with \cref{eq:lb-rect} establishes the result. Note that $\lambda_{\max}(T_{\widehat{\psi}_j}^\mathsf{H} A_\Lambda T_{\widehat{\psi}_j}) >0$. Indeed, suppose, for contradiction, that $\lambda_{\max}(T_{\widehat{\psi}_j}^\mathsf{H} A_\Lambda T_{\widehat{\psi}_j}) =0$, then, as $T_{\widehat{\psi}_j}^\mathsf{H} A_\Lambda T_{\widehat{\psi}_j}$ is positive semi-definite, we have $T_{\widehat{\psi}_j}^\mathsf{H} A_\Lambda T_{\widehat{\psi}_j} =0$. This, in turn, implies that $\traceinline{G_j A_{\Lambda}} =0$ upon application of \cref{eq:GA-lb} together with the analogous upper bound $\traceinline{G_jA_\Lambda} \leq \lipbound^{s} \traceinline{M_j T_{\widehat{\psi}_j}^\mathsf{H}A_\Lambda T_{\widehat{\psi}_j} }$. But $\traceinline{G_j A_{\Lambda}} = \int_{E_j} \lVert \Phi_1(f) \rVert^2\,\mathrm{d}\mathcal{H}^s(f)$, so that $\Phi_1$ vanishes $\mathcal{H}^s$-a.e.\ on $E_j$, a contradiction. This completes the proof.  
\end{enumerate}
\end{proof}

Analogously to the sparse-signal model, the result in \cref{thm:lb-rect} extends to the first-layer feature map $\Phi^1$. 
\begin{corollary}
    Under the hypotheses of \cref{thm:lb-rect}, the following statements hold for $\Phi^1(f) = (\rho(f\ast g_\lambda)\ast \chi)_{\lambda \in \Lambda}$, $f \in E$.
    \begin{enumerate}[label=(\alph*)]
    \item If $\Phi^1$ vanishes $\mathcal{H}^s$-a.e.\ on $E_j$, for some
    $j\in\mathcal{J}$, then $\sepcaps{s}{\Phi^1}=0$.
    \item If $\Phi^1$ does not vanish $\mathcal{H}^s$-a.e.\ on $E_j$, for any
    $j\in\mathcal{J}$, then
    \begin{align*}
    \sepcaps{s}{\Phi^1} \geq \lipbound^{-4s} \inf_{j \in \mathcal{J}} \left(\frac{\lambda_{\min}\left(T_{\widehat{\psi}_j}^\mathsf{H} A_{\Lambda,\chi} T_{\widehat{\psi}_j}\right)}{\lambda_{\max}\left(T_{\widehat{\psi}_j}^\mathsf{H} A_{\Lambda,\chi} T_{\widehat{\psi}_j}\right)}\right)^2 \frac{2(\trace{M_j})^2}{\trace{(M_j)^2}}.
\end{align*}
\end{enumerate}
\end{corollary}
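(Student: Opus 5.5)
The plan is to reduce the corollary to \cref{thm:lb-rect} by replacing $A_\Lambda$ with $A_{\Lambda,\chi}$, in the same way the sparse-signal corollary followed from \cref{thm:lb-sparse}. The key observation is that
\[
F_M\bigl(\rho(f\ast g_\lambda)\ast\chi\bigr)=\diag{\widehat{\chi}}\,A_\lambda\, v_{M,d}(\widehat{f}) = A_{\lambda,\chi}\, v_{M,d}(\widehat{f}),
\]
which follows from the convolution theorem together with the computation in the proof of \cref{L:UP_one_node}. Using Parseval in the form $\innerprod{u}{v}=\frac1M\innerprod{F_Mu}{F_Mv}$, I then get, for $f_1,f_2\in E$,
\[
\innerprod{\Phi^1(f_1)}{\Phi^1(f_2)} = \innerprod{A_{\Lambda,\chi}v_{M,d}(\widehat{f_1})}{v_{M,d}(\widehat{f_2})}.
\]
This is exactly the identity used in \cref{L:LB-1.layer}, with $A_{\Lambda,\chi}$ in place of $A_\Lambda$. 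The matrix $A_{\Lambda,\chi}=\frac1M\sum_\lambda A_{\lambda,\chi}^{\mathsf H}A_{\lambda,\chi}$ is positive semi-definite, which is the only property of $A_\Lambda$ the proof of \cref{thm:lb-rect} relies on.

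Part (a) then follows exactly as before. I apply \cref{eq:sepcap-form2} with $A=\{f\in E_j\colon \Phi^1(f)=0\}$. This set has full $\mathcal H^s$-measure in $E_j$, hence positive measure, and the span of $\Phi^1(A)$ is trivial, so $\sepcaps{s}{\Phi^1}=0$.

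For part (b), I proceed in four steps.
\begin{enumerate}[label=(\roman*)]
\item Since $\Phi^1$ is a polynomial, hence real-analytic, in the parameters $x$, \cref{cor:lower-bound-real-analytic} applies. Combined with $\lvert\Re\innerprod{\cdot}{\cdot}\rvert\le\lvert\innerprod{\cdot}{\cdot}\rvert$ and the identity above, the trace computation of \cref{L:LB-1.layer} gives
\[
\sepcaps{s}{\Phi^1}\ge\inf_j \frac{2(\trace{G_jA_{\Lambda,\chi}})^2}{\trace{(G_jA_{\Lambda,\chi})^2}}.
\]
\item The area formula gives the sandwich $\lipbound^{-s}T_{\widehat\psi_j}M_jT_{\widehat\psi_j}^{\mathsf H}\preceq G_j\preceq \lipbound^{s}T_{\widehat\psi_j}M_jT_{\widehat\psi_j}^{\mathsf H}$.
\item Combining the sandwich with positive semi-definiteness of $A_{\Lambda,\chi}$ bounds the numerator below by $\lipbound^{-2s}$ times its $M_j$-counterpart and the denominator above by $\lipbound^{2s}$ times its counterpart.
\item Finally, \cref{eq:tr-cond-number} with $C_{S,j}$ replaced by $M_j$ and $A_\Lambda$ by $A_{\Lambda,\chi}$ produces the squared eigenvalue ratio.
\end{enumerate}

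The only point needing care, and the main obstacle, is checking that $\lambda_{\max}(T_{\widehat{\psi}_j}^\mathsf{H}A_{\Lambda,\chi}T_{\widehat{\psi}_j})>0$, so that the ratio is well defined. I would argue by contradiction, as in \cref{thm:lb-rect}. If $\lambda_{\max}=0$, positive semi-definiteness forces the matrix to vanish. The two-sided bound $\lipbound^{-s}\le\trace{G_jA_{\Lambda,\chi}}/\trace{M_jT^{\mathsf H}A_{\Lambda,\chi}T}\le\lipbound^{s}$ then forces $\trace{G_jA_{\Lambda,\chi}}=0$. But this trace equals $\int_{E_j}\lVert\Phi^1\rVert^2\,\mathrm d\mathcal H^s$, so $\Phi^1$ would vanish $\mathcal H^s$-a.e.\ on $E_j$, contradicting the hypothesis of (b). With this, every step of the earlier proof carries over verbatim, which establishes the claim.
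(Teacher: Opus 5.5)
Your proposal is correct and follows the paper's proof. The identity $F_M(\rho(f\ast g_\lambda)\ast\chi)=\diag{\widehat{\chi}}A_\lambda v_{M,d}(\widehat{f})$ shows that $\chi$ enters only by replacing $A_\Lambda$ with the positive semi-definite matrix $A_{\Lambda,\chi}$, after which the proof of \cref{thm:lb-rect}, including the contradiction argument for $\lambda_{\max}>0$, is repeated verbatim. One cosmetic point: in that contradiction step, use the product-form inequality $\trace{G_jA_{\Lambda,\chi}}\le\lipbound^{s}\trace{M_jT_{\widehat{\psi}_j}^{\mathsf H}A_{\Lambda,\chi}T_{\widehat{\psi}_j}}$ rather than a ratio, since the ratio's denominator is exactly the quantity you assume to vanish.
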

\begin{proof}
    As $F_M(\rho(f \ast g_\lambda)\ast\chi) = \diag{\widehat{\chi}}\, A_\lambda\,v_{M,d}(\widehat{f}\,)$, the output-generating atom $\chi$ enters the analysis only through the substitution of $A_{\Lambda,\chi}$ for $A_\Lambda$. Repeating the proof of \cref{thm:lb-rect} with this modification yields the claim.
\end{proof}
We conclude with the design criterion that the filters $\{\chi\}\cup\{g_\lambda\}_{\lambda \in \Lambda}$ should be chosen so that $T_{\widehat{\psi}_j}^\mathsf{H} A_{\Lambda,\chi} T_{\widehat{\psi}_j}$ is well-conditioned, for all $j \in \mathcal{J}$, in the sense that the ratio $\lambda_{\min}(T_{\widehat{\psi}_j}^\mathsf{H} A_{\Lambda,\chi} T_{\widehat{\psi}_j})/\lambda_{\max}(T_{\widehat{\psi}_j}^\mathsf{H} A_{\Lambda,\chi} T_{\widehat{\psi}_j})$ is close to $1$.

\appendix
\section{Restricted isometry property}\label[appendix]{app:cs}
\begin{definition}[$s$th restricted isometry constant, \cite{foucart2013mathematical}]\label{def:RIC}
    The \emph{$s$th restricted isometry constant} $\delta_s(B)$ of a matrix $B \in \mathbb{C}^{m \times N}$ is the smallest $\delta \geq 0$ such that
    \begin{align*}
        (1-\delta)\|x\|^2 \leq \|Bx\|^2 \leq (1 + \delta)\|x\|^2,
    \end{align*}
    for all $x \in \mathbb{C}^N$ with $\lvert \supp{x} \rvert \leq s$.
\end{definition}
Denote by $B_\tau$ the $(m \times \lvert \tau\rvert)$-submatrix of $B$ with columns indexed by some set $\tau$, and let $\lambda_{\min}(B_\tau ^\mathsf{H}B_\tau)$ and $\lambda_{\max}(B_\tau ^\mathsf{H}B_\tau)$ be the smallest and largest eigenvalues of $B_\tau ^\mathsf{H}B_\tau$, respectively.  
\begin{lemma}[\cite{candes2005decoding}]\label{L:ric-eigen}
    For $B \in \mathbb{C}^{m \times N}$, and all index sets $\tau$ with $\lvert \tau \rvert \leq s$, it holds that
    \begin{align*}
        1-\delta_s(B)\leq \lambda_{\min}(B_\tau^\mathsf{H}B_\tau) \leq \lambda_{\max}(B_\tau^\mathsf{H}B_\tau) \leq 1+\delta_s(B).
    \end{align*}
\end{lemma}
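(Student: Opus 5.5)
The statement is \cref{L:ric-eigen}, which relates the restricted isometry constant to the extreme eigenvalues of the Gram submatrices $B_\tau^\mathsf{H}B_\tau$. I would prove it by the Rayleigh quotient characterization of the extreme eigenvalues of a Hermitian matrix. Fix an index set $\tau$ with $\lvert\tau\rvert\le s$. The matrix $B_\tau^\mathsf{H}B_\tau\in\mathbb C^{\lvert\tau\rvert\times\lvert\tau\rvert}$ is Hermitian and positive semi-definite, so
\begin{align*}
    \lambda_{\min}(B_\tau^\mathsf{H}B_\tau)=\min_{u\neq 0}\frac{\lVert B_\tau u\rVert^2}{\lVert u\rVert^2},\qquad
    \lambda_{\max}(B_\tau^\mathsf{H}B_\tau)=\max_{u\neq 0}\frac{\lVert B_\tau u\rVert^2}{\lVert u\rVert^2},
\end{align*}
with $u\in\mathbb C^{\lvert\tau\rvert}$. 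The middle inequality $\lambda_{\min}\le\lambda_{\max}$ is then trivial.

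The key step is to embed each $u\in\mathbb C^{\lvert\tau\rvert}$ as $x\in\mathbb C^N$. Set $x_\tau=u$ and $x_k=0$ for $k\notin\tau$. Then $\lvert\supp{x}\rvert\le\lvert\tau\rvert\le s$, $\lVert x\rVert=\lVert u\rVert$, and $Bx=B_\tau u$. By \cref{def:RIC}, with $\delta=\delta_s(B)$ (the defining inequalities hold at the minimal $\delta$ because the defining set of admissible $\delta$ is closed), we get
\begin{align*}
    (1-\delta_s(B))\lVert u\rVert^2\le\lVert B_\tau u\rVert^2\le(1+\delta_s(B))\lVert u\rVert^2
\end{align*}
for all $u$. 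Dividing by $\lVert u\rVert^2$ and taking the minimum and the maximum over $u\neq 0$ gives the two outer inequalities.

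There is no real obstacle here. The only point that needs a remark is that the infimum in \cref{def:RIC} is attained: the set of admissible $\delta$ is an intersection of closed half-lines, so it is closed. The degenerate case $\tau=\emptyset$ can be excluded or treated as vacuous.
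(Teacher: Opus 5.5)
Your proof is correct. The paper gives no argument of its own and only cites Cand\`es--Tao, where this is the standard observation you reproduce: zero-pad $u\in\mathbb C^{\lvert\tau\rvert}$ to an $s$-sparse $x$ with $Bx=B_\tau u$, then apply the Rayleigh quotient characterization of $\lambda_{\min}$ and $\lambda_{\max}$. Checking that the minimal $\delta$ is attained is the right detail; the admissible set is also nonempty, since it contains $\max\{1,\lVert B\rVert^2-1\}$, so the minimum exists.
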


\bibliography{ref}

\end{document}